\documentclass{article} 
\usepackage[final]{colm2026_conference}
\usepackage{graphicx}
\usepackage{smilegroup}
\usepackage{enumitem}
\usepackage{hyperref}
\usepackage{url}
\usepackage{algorithm}
\usepackage{algorithmic}
\usepackage[most]{tcolorbox}
\usepackage{caption}
\definecolor{customblue}{HTML}{367DB0}
\definecolor{lightblue}{HTML}{a8dadc}
\definecolor{mydarkgreen}{RGB}{0,128,0}
\usepackage[textsize=tiny]{todonotes}

\providecommand{\query}{\mathsf{query}}
\providecommand{\prompt}{P}
\providecommand{\seqspace}{\mathcal{S}}
\providecommand{\calDx}{\mathcal{D}_x}
\providecommand{\calF}{\mathcal{F}}
\providecommand{\calH}{\mathcal{H}}
\providecommand{\calDH}{\mathcal{D}_{\mathcal{H}}}
\providecommand{\calX}{\mathcal{X}}
\providecommand{\calY}{\mathcal{Y}}
\providecommand{\calD}{\mathcal{D}}
\providecommand{\iid}{\stackrel{\mathrm{ i.i.d.}}{\sim}}
\providecommand{\R}{\mathbb{R}}
\providecommand{\N}{\mathbb{N}}
\providecommand{\E}{\mathbb{E}}

\ifx\theorem\undefined
  \newtheorem{theorem}{Theorem}[section]
\fi
\ifx\proposition\undefined
  \newtheorem{proposition}[theorem]{Proposition}
\fi
\ifx\lemma\undefined
  \newtheorem{lemma}[theorem]{Lemma}
\fi
\ifx\corollary\undefined
  \newtheorem{corollary}[theorem]{Corollary}
\fi
\theoremstyle{definition}
\ifx\definition\undefined
  \newtheorem{definition}[theorem]{Definition}
\fi
\ifx\assumption\undefined
  \newtheorem{assumption}[theorem]{Assumption}
\fi
\theoremstyle{remark}
\ifx\remark\undefined
  \newtheorem{remark}[theorem]{Remark}
\fi

\usepackage{microtype}
\usepackage{booktabs}
\usepackage{wrapfig}

\usepackage{lineno}

\definecolor{darkblue}{rgb}{0, 0, 0.5}
\hypersetup{colorlinks=true, citecolor=darkblue, linkcolor=darkblue, urlcolor=darkblue}

\title{FERA: Uncertainty-Aware Federated Reasoning for Large Language Models}

\author{Ruhan Wang \\
Indiana University\\
\texttt{ruhwang@iu.edu} \\
\And
Chengkai Huang \\
{The University of New South Wales} \\
\texttt{chengkai.huang1@unsw.edu.au} \\
\And
Zhiyong Wang \\
The Chinese University of Hong Kong \\
\texttt{zhiyongwangwzy@gmail.com} \\
\And
Junda Wu \\
{University of California San Diego}\\
\texttt{juw069@ucsd.edu}
\And
Rui Wang \\
Adobe Research \\
\texttt{ruiwan@adobe.com}
\And
Tong Yu \\
Adobe Research  \\
\texttt{tyu@adobe.com}
\And
Julian McAuley \\
{University of California San Diego}\\
\texttt{jmcauley@ucsd.edu}
\And
Lina Yao \\
{The University of New South Wales} \\
\texttt{lina.yao@unsw.edu.au}
\And
Dongruo Zhou\\
Indiana University\\
\texttt{dz13@iu.edu}
}

\begin{document}

\ifcolmsubmission
\linenumbers
\fi

\maketitle

\begin{abstract}
Large language models (LLMs) exhibit strong reasoning capabilities when guided by high-quality demonstrations, yet such data is often distributed across organizations that cannot centralize it due to regulatory, proprietary, or institutional constraints. We study \emph{federated reasoning}, where a server improves multi-step reasoning by coordinating with heterogeneous clients holding private demonstrations, without centralized training or raw data sharing. The key challenge is that client reliability is query-dependent, while the server cannot inspect client data to determine which contributions are trustworthy. To address this, we propose \emph{Uncertainty-Aware Federated Reasoning} (FERA), a training-free framework based on iterative server--client co-refinement. Across communication rounds, clients generate reasoning traces with lightweight uncertainty estimates, and the server synthesizes them into improved reasoning that is redistributed as context for the next round, progressively improving both server outputs and client-side reasoning. Within each round, \emph{Uncertainty-Aware Self-Critique Aggregation} (UA-SCA) resolves conflicts among heterogeneous client traces through query-dependent trust weighting and structured cross-client verification. Rather than simply discarding low-quality traces, UA-SCA revises flawed reasoning steps to recover useful information. We provide theoretical guarantees showing that the proposed iterative protocol converges, and that uncertainty-aware weighting accelerates convergence. Experiments on multiple reasoning benchmarks show that FERA consistently outperforms both federated training and training-free baselines, achieving progressively higher accuracy across rounds while maintaining communication and computational efficiency.
\end{abstract}

\section{Introduction}
Reasoning ability is fundamental to complex problem solving, as it enables a system to decompose high-level objectives into a coherent sequence of logical steps rather than relying on stored knowledge \citep{zhang2023cumulative,sun2023survey}. To enhance such reasoning capabilities, recent work has leveraged Large Language Models (LLMs) to generate explicit step-by-step deductive chains during problem solving \citep{wei2022chain,wang2023plan}. However, the reasoning performance of LLMs cannot be substantially improved by relying solely on their pre-trained capabilities, particularly for rigorous and structured tasks \citep{morishita2024enhancing,shojaee2025illusion}. Effective reasoning enhancement instead depends on access to high-quality data that can guide and refine the generation of logical steps \citep{morishita2024enhancing,cheng2025empowering}. In practice, such reasoning data are difficult to obtain, as they are often scattered across diverse domains and subject to privacy or proprietary constraints \citep{green2025leaky,rischke2022federated,baack2025towards}. As a result, a central research challenge is how to improve reasoning performance under limited data availability. For example, in a healthcare network where hospitals hold proprietary clinical data, a central server cannot pool data due to privacy regulations but can aggregate reasoning traces generated locally by each hospital's LLM. We refer to this setting as \emph{federated reasoning}.

Federated reasoning provides a way to improve LLM reasoning by allowing collaboration across decentralized data sources while keeping raw data local \citep{liu2024deepseek, achiam2023gpt,wei2022chain, kojima2022large}. Existing methods generally fall into training-based and training-free approaches. Training-based methods require clients to fine-tune local models and exchange parameters \citep{wang2024flora, wu2024fedbiot, ma2023fedid}, which is costly for modern LLMs in both computation and communication~\citep{yan2025federated, shu2024ferret, liu2023federated, che2023federated, wang2024one, liu2025ecolora}. Training-free methods reduce this cost by exchanging only lightweight information, such as prompts or retrieved examples \citep{chen2025can, wang2025federated, wu2024federated, du2023improving}. However, their performance is strongly influenced by cross-client heterogeneity. Under domain mismatch, local LLMs may produce overconfident yet systematically biased reasoning, leading the server to receive conflicting responses to the same query. Because each client reasons over \emph{private, non-overlapping} data, such disagreements reflect genuine information asymmetry rather than mere noise. However, the server cannot verify which client’s knowledge is most relevant to a given query. This challenge motivates the central question we address:

\begin{tcolorbox}[colback=blue!3!white, colframe=blue!50!black, boxrule=0.4pt, left=4pt, right=4pt, top=3pt, bottom=3pt]
How can a training-free federated reasoning framework progressively improve reasoning quality across rounds while reliably aggregating conflicting client contributions with query-dependent reliability?
\end{tcolorbox}

To address this question, we propose \textbf{Uncertainty-Aware Federated Reasoning (FERA)}. Our key contributions are:

\begin{figure*}[t]
    \centering
    \includegraphics[width=\linewidth]{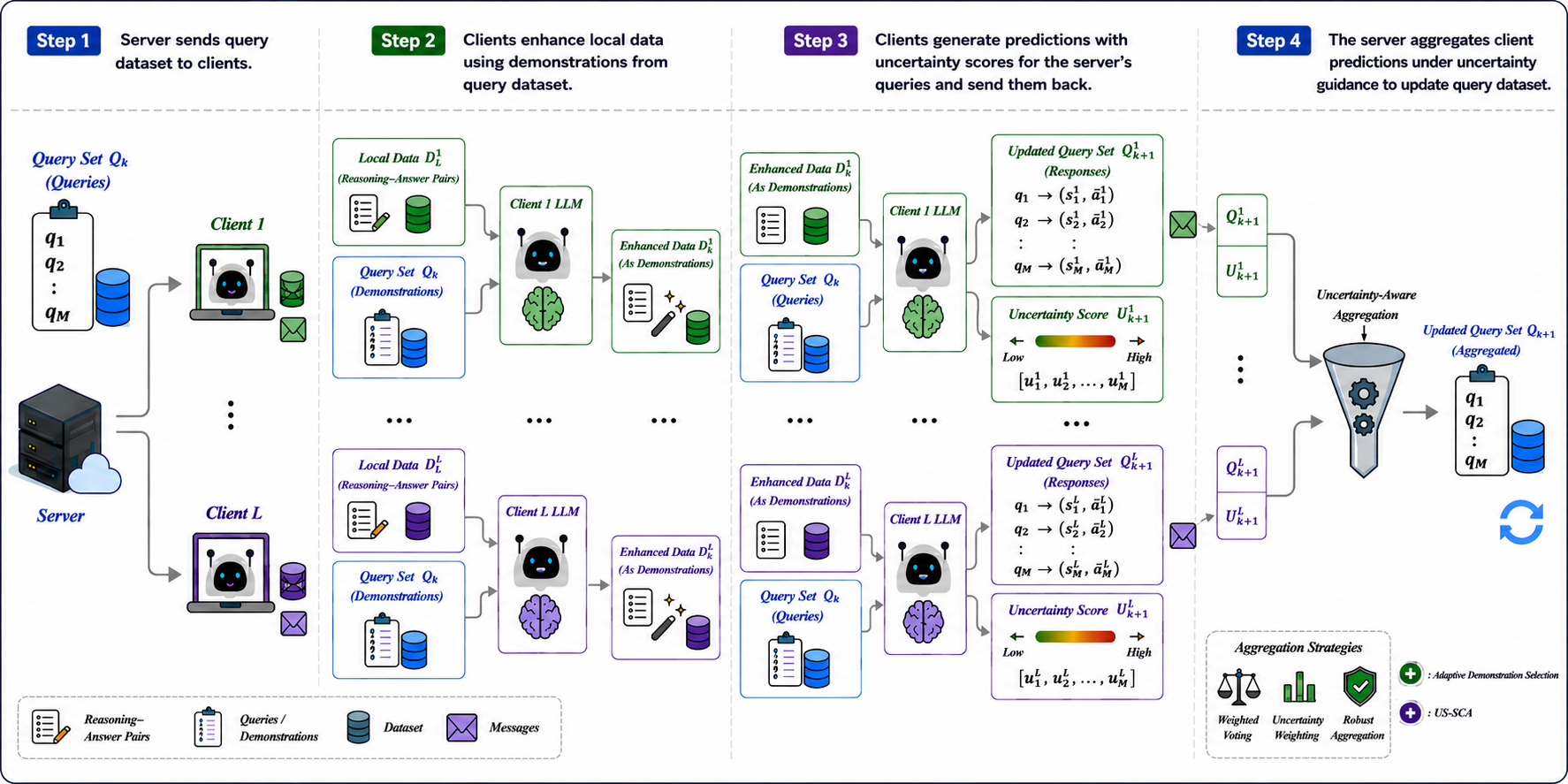}
    \vspace{-0.2in}
    \caption{\footnotesize Overview of the FERA framework. Over multiple rounds, the server distributes context to clients, who generate reasoning traces with uncertainty estimates from private data. The server refines its reasoning via \textbf{\textit{UA-SCA}} and redistributes improved context, creating a co-refinement loop where both server outputs and client contexts improve simultaneously. Detailed explanations are provided in Section~\ref{sec:aggregation}.}
    \label{fig:FERA}
\end{figure*}

\begin{itemize}[leftmargin=*, nosep]
\item We introduce FERA, a training-free \emph{iterative co-refinement} framework for federated reasoning (Figure~\ref{fig:FERA}). FERA operates over multiple communication rounds in which the server distributes its current reasoning context to clients, each client generates reasoning traces with uncertainty estimates using its private data, and the server synthesizes improved reasoning that is redistributed as enhanced context for the next round. This bidirectional loop allows both the server’s answers and clients’ local contexts to improve progressively, without parameter updates or data sharing.

\item To resolve conflicting reasoning within each round, we introduce \textbf{Uncertainty-Aware Self-Critique Aggregation (UA-SCA)}, which addresses the query-dependent reliability problem at the reasoning level rather than only at the answer level. UA-SCA assigns query-dependent trust weights based on token-level entropy, and employs structured self-critique where each client’s reasoning is cross-examined against competing answer groups at the server. This allows flawed reasoning steps to be identified and \emph{revised} using evidence from other clients, rather than simply discarded. Ablation studies (Section~\ref{sec:4.4}) confirm that uncertainty weighting and self-critique provide complementary gains.

\item We provide theoretical analysis under a simplified \emph{linear self-attention} model~\citep{zhang2023trained}, proving that FERA’s iterative protocol converges to ground-truth answers as the number of demonstrations grows. Importantly,  incorporating uncertainty-aware weights provably accelerates the convergence rate, providing principled motivation for the framework design.

\item We evaluate FERA on general and mathematical reasoning benchmarks under heterogeneous client data. FERA consistently outperforms both training-based and training-free baselines, with accuracy improving across communication rounds, while maintaining significantly higher computation and communication efficiency than other baselines.
\end{itemize}

\section{Related Work}
Recent advances in FL and LLMs increasingly highlight uncertainty as a key tool for handling heterogeneity and improving decision reliability (see Appendix~\ref{app:related_work} for details). In federated reasoning, existing methods can be broadly categorized into \emph{training-based} and \emph{training-free} approaches. Training-based methods, such as FedKSeed~\citep{qin2023federated} and FLoRA~\citep{wang2024flora}, improve performance via parameter updates but incur additional training cost and system complexity. In contrast, training-free methods~\citep{liu2023federated, chen2025can, wang2025federated, wu2024federated, du2023improving} are more lightweight but struggle with data heterogeneity and complex reasoning tasks. Among them, prompt aggregation methods~\citep{liu2023federated, chen2025can} often rely on data homogeneity or retrieval quality, in-context learning approaches~\citep{wang2025federated, wu2024federated} are typically limited to simple QA settings, and debate-based frameworks~\citep{du2023improving} require direct inter-client communication and treat all client outputs as equally trustworthy. Meanwhile, uncertainty quantification has shown broad effectiveness in FL for handling heterogeneity and model differences~\citep{koutsoubis2025privacy, zhang2025uncertainty, wang2024bridging, zhang2024uncertainty}, and in LLMs for uncertainty decomposition, active learning, and adaptive reasoning~\citep{hou2023decomposing, huang2024unlocking, wang2025uncertainty, yang2023improving}. These advances motivate FERA, which leverages uncertainty to enable training-free federated reasoning under heterogeneous client knowledge.
\section{Preliminaries}
\label{sec:Preliminaries}
\vspace{-0.3em}
We consider a federated reasoning setting in which a central server coordinates $L$ clients under a standard client--server architecture. We first define the notation used throughout this paper.

\textbf{Notation.} A \emph{query} $q$ is a question to be answered. A \emph{reasoning trace} $s_{\{1:T\}} = (s_1, \ldots, s_T)$ is a sequence of $T$ intermediate reasoning steps leading to a final answer $a$. A \emph{demonstration} is a complete triple $(q, s_{\{1:T\}}, a)$ used as a prompt exemplar to guide LLM generation via in-context learning.
\vspace{-0.4em}

\textbf{Client-side data.} Each client $i$ holds a private dataset $D^i = \{(q_n^i, s_{\{1:T\},n}^i, a_n^i)\}_{n=1}^N$ of $N$ demonstrations, representing the client's domain-specific reasoning knowledge. These demonstrations are never shared with the server; only the reasoning outputs generated from them are communicated.
\vspace{-0.4em}

\textbf{Server-side data.} The server maintains a query set $Q = \{(q_m, s_{\{1:T\},m}, a_m)\}_{m=1}^M$ of $M$ entries, representing the questions the server seeks to answer.

\noindent In this setting, the server must coordinate with clients whose data quality and domain coverage vary from query to query. Since client data is private and heterogeneous, client-generated reasoning traces for the same query may conflict or contain overconfident errors. This raises several questions:
\begin{itemize}[leftmargin=*, nosep]
    \item How can the server determine which clients are reliable for a given query without observing their private data?
    \item How can the server identify and correct flawed intermediate reasoning steps when clients produce conflicting traces for the same query, rather than simply picking one answer?
    \item How can this be achieved efficiently in both computation and communication, without model fine-tuning or parameter exchange?
\end{itemize}
\vspace{-0.3em}
In the following sections, we address these challenges and present \textbf{FERA}, a training-free framework for uncertainty-aware federated reasoning.
\begin{algorithm*}[t]
\caption{Uncertainty-Aware Self-Critique Aggregation}
\label{alg:UA-SCA}
\begin{algorithmic}[1]
\REQUIRE Query index $m$, round $k$, client submissions $\{(s_{\{1:T\},k, m}^i, a_{k,m}^i, u_{k,m}^i)\}_{i=1}^L$, the server LLM denoted by $\text{LLM}^S$.
\ENSURE Aggregated reasoning--answer pair $(s^\star, a^\star)$.
\STATE Partition $\{(s_{\{1:T\},k, m}^i, a_{k,m}^i)\}_{i=1}^L$ into groups $\mathcal{G}=\{G\}$ based on $a_m$.
\FOR{each group $G \in \mathcal{G}$}
  \STATE $S_{G} \leftarrow \textsf{Summarize}(\{s_{\{1:T\},k, m}^i : (s_{\{1:T\},k, m}^i, a_{k,m}^i)\in G\};\,\text{LLM}^S)$.
\ENDFOR
\FOR{$i=1,\dots, L$}
\STATE Denote $G(i) \in \cG$ as the group which $(s_{\{1:T\},k, m}^i, a_{k,m}^i)$ belongs to.
  \STATE $(\hat s_{\{1:T\},k, m}^i, \hat a_{k,m}^i) \leftarrow \textsf{SelfCritique}((s_{\{1:T\},k, m}^i, a_{k,m}^i), \{S_{G}: G \neq G(i)\};\,\text{LLM}^S)$.
\ENDFOR
\STATE Calculate weights ${\{w_i\}}_{i=1}^{L}$ based on \ref{eq:www}. 
\STATE $(s_{\{1:T\}, k+1, m}, a_{\{1:T\}, k+1, m}) \leftarrow \textsf{Aggregate}(\{(\hat s_{\{1:T\},k, m}^i, \hat a_{k,m}^i, w_i)\}_{i=1}^L;\,\text{LLM}^S)$ .
\STATE \textbf{return} $(s_{\{1:T\}, k+1, m}, a_{\{1:T\}, k+1, m})$.
\end{algorithmic}
\end{algorithm*}

\section{Uncertainty-Aware Federated Reasoning}
\label{sec:3}

We now present FERA. Two observations motivate its design. First, a single round of client responses is often noisy and incomplete, especially under heterogeneous data; iterating over multiple rounds allows both the server's reasoning and clients' local contexts to progressively improve. Second, within each round, client reliability varies from query to query, and the server has no direct way to judge which responses to trust. FERA combines an iterative server--client co-refinement protocol (Section~\ref{sec:iterative}) with uncertainty-guided reasoning correction (Section~\ref{sec:uasca}) to address both challenges jointly.

\subsection{Iterative Server--Client Co-Refinement}
\label{sec:iterative}

FERA operates over $K$ communication rounds (Algorithm~\ref{alg:FERA}). Each round consists of three steps. First, clients enrich their private datasets using server-distributed context as demonstrations (\emph{local refinement}). Second, clients generate reasoning--answer pairs for the server's queries using the enriched datasets, together with uncertainty estimates (\emph{client labeling}). Third, the server synthesizes improved reasoning from client contributions via UA-SCA and redistributes the updated context for the next round (\emph{server update}). Only reasoning outputs and uncertainty signals are transmitted, keeping communication overhead low.

\textbf{Server Distribution.} At round $k$, the server distributes the query dataset $Q_k$ to all clients.
\vspace{-0.4em}

\textbf{Local Refinement.} Client $i$ selects a subset of demonstrations $\mathcal{S}_{k,Q}^i \subseteq Q_k$ from the server-provided query set and incorporates them into prompts for its local language model $\mathrm{LLM}^i$. Using these demonstrations, the client generates refined reasoning--answer pairs
$(s_{\{1:T\},k,n}^i, a_{k,n}^i) \sim \text{LLM}^i(s_{\{1:T\}}, a \mid q_{n}^i, \mathcal{S}_{k,Q}^i)$,which are appended to the local dataset, yielding the updated dataset
$D_k^i = D^i \cup \{(q_{n}^i, s_{\{1:T\},k,n}^i, a_{k,n}^i)\}_{n=1}^{N}$. The demonstration set $\mathcal{S}_{k,Q}^i$ is selected using Maximal Marginal Relevance (MMR)~\citep{carbonell1998use}:
\begin{small}
\begin{equation}
\mathcal{S}_{k,Q}^i
= \arg\max_{\mathcal{S} \subseteq Q_k} \Big[
\tfrac{1}{|\mathcal{S}|}
\textstyle\sum_{(q', s_{1:T}', a') \in \mathcal{S}}
Sim\big((q_n^i, s_{1:T,k-1,n}^i, a_{k-1,n}^i),
(q', s_{1:T}', a')\big)
- \lambda \cdot Div(\mathcal{S})
\Big],
\label{eq:mmr}
\end{equation}
\end{small}where $Sim(\cdot,\cdot)$ measures semantic similarity between examples, $\lambda > 0$ is the regularization parameter, and $Div(\mathcal{S})$ is a diversity measure of set $\mathcal{S}$.
\vspace{-0.4em}

\textbf{Client Labeling.} Using the refined dataset $D_k^i$, client $i$ generates predictions $\{a_{k+1,m}^i\}_{m=1}^M$ for the server queries $\{q_m\}_{m=1}^M$. For each query $q_m$, the client constructs a demonstration set $\mathcal{S}_{k,D}^i \subseteq D_k^i$ and prompts its local model to produce an updated reasoning--answer pair $(s_{\{1:T\},k+1,m}^i, a_{k+1,m}^i) \sim \mathrm{LLM}^i(s_{\{1:T\}}, a \mid q_m, \mathcal{S}_{k,D}^i)$. The demonstration set $\mathcal{S}_{k,D}^i$ is again selected using the MMR strategy to balance relevance to the current query and diversity among examples. Along with each generated response, the client computes an uncertainty score $u_{k+1,m}^i$ based on the output logits of $\mathrm{LLM}^i$.
\vspace{-0.4em}

\textbf{Server Update.} The server collects all client responses and updates $Q_{k+1}$ via UA-SCA (Section~\ref{sec:uasca}), then redistributes the refined $Q_{k+1}$ to clients for the next round. As $Q_{k+1}$ contains higher-quality reasoning traces than $Q_k$, it provides stronger demonstrations that enable clients to produce more accurate reasoning in round $k{+}1$. Meanwhile, each client augments its local dataset $D_k^i$ with reasoning--answer pairs derived from these refined traces. This process leads to progressive co-improvement of both server outputs and client-side context across rounds, while all communication remains server-mediated, ensuring that raw client data never leaves the local device.
\vspace{-0.4em}

\textbf{Privacy considerations.} FERA operates under a \emph{data-local} threat model: raw client data $D^i$ never leaves the client, and only reasoning traces generated for server-provided queries are transmitted. This is weaker than formal differential privacy but consistent with practical federated deployments where the primary concern is preventing raw data centralization. We acknowledge that reasoning traces may indirectly leak information about client data; an empirical leakage audit (Appendix~\ref{app:privacy}) using NER-based detection shows that only 0.043\% of transmitted responses contain identifiers not already present in the server query, indicating minimal unintended leakage in practice.

\subsection{Uncertainty-Aware Self-Critique Aggregation}
\label{sec:uasca}

Within each round, the server receives multiple reasoning traces for the same query, often with conflicting intermediate steps and final answers. The core challenge is that, without access to client data distributions, the server cannot determine which client is most trustworthy for the query at hand. Existing approaches are insufficient. Uniform aggregation~\citep{chen2025can} ignores query-dependent reliability. Iterative mutual alignment~\citep{du2023improving} may converge to a shared yet biased reasoning path because it lacks a mechanism for identifying trustworthy clients on a per-query basis. Uncertainty-weighted voting~\citep{agrawal2025uncertainty} uses confidence signals, but only at the answer level, and thus cannot correct flawed intermediate reasoning when a client is confidently wrong under domain mismatch. To address these limitations, we propose Uncertainty-Aware Self-Critique Aggregation (UA-SCA), which combines query-dependent uncertainty weighting with structured self-critique and cross-client verification. Instead of merely selecting or discarding entire responses, UA-SCA revises weakly justified but informative reasoning traces at the step level. The full three-stage pipeline is described in Algorithm~\ref{alg:UA-SCA}.

\textbf{Stage 1: Per-query uncertainty estimation.} Since the server cannot observe client data, each client self-reports an uncertainty score derived from output logits. Specifically, for a generated token at position $t$, the entropy is $H_t = -\sum_{i=1}^V p_{t,i}\log(p_{t,i}+\varepsilon)$, where $p_{t,i}$ is the softmax probability of token $i$, $V$ is the vocabulary size, and $\varepsilon$ is a small constant for numerical stability \citep{duan2023shifting, farquhar2024detecting, zhang2025token}. The uncertainty of a reasoning sequence is the average entropy over all $T$ generated tokens: $U=\frac{1}{T}\sum_{t=1}^T H_t$. This measure is training-free and calibration-free, making it well suited to the federated setting where clients deploy their own local models with accessible output logits. When output logits are not accessible, alternative uncertainty estimation methods such as verbalized confidence or sampling-based consistency can be used as replacements. A detailed discussion of design considerations for the uncertainty measure, including comparison with alternative methods, is provided in Appendix~\ref{app:unc_design}. The server converts uncertainty scores into query-dependent weights: $w_{k+1,m}^i = \frac{\exp(-u_{k+1,m}^i / \tau)}{\sum_{j=1}^L \exp(-u_{k+1,m}^j / \tau)}, \quad \tau > 0,\label{eq:weight}$ where $u^i$ is the uncertainty score of client $i$ and $L$ is the number of clients. The softmax in Eq.~\eqref{eq:weight} normalizes across clients for each query, so absolute uncertainty scales need not be comparable across different model families; only the relative ordering within a query matters. We empirically verify this under model-capacity heterogeneity (Qwen3-4B/1.7B/0.6B) in Appendix~\ref{app:exp_analysis}, where FERA maintains stable performance despite substantial differences in raw entropy scales. 


\textbf{Stage 2: Structured self-critique.} Uncertainty weighting helps identify \emph{which} clients are more trustworthy for a given query, but it cannot determine \emph{what} is wrong in their reasoning. Under domain mismatch, a client may still produce an incorrect answer with high confidence, as the model is confidently relying on incorrect domain knowledge. Therefore, beyond trust weighting, the server must also diagnose and revise flawed reasoning content at the step level. To enable this, UA-SCA first partitions client submissions into groups $\mathcal{G} = \{G_r\}$ according to exact matches of the final answer $a_{k,m}^i$ (e.g., the selected option in multiple-choice tasks or the numerical result in math tasks), where each group $G_r = \{i \mid a_{k,m}^i = r\}$ contains all clients supporting candidate answer $r$. For each group, UA-SCA constructs a representative summary of its reasoning traces, capturing the dominant reasoning pattern behind that answer. Each client's trace is then compared against summaries from \emph{other} groups and revised through structured self-critique when its intermediate steps conflict with explanations that are more coherent or better supported. In this way, UA-SCA targets unsupported assumptions and weak reasoning steps directly, allowing confident but poorly justified traces to be corrected using evidence from other clients rather than simply discarded. Although self-critique and cross-agent verification have been studied in centralized settings~\citep{yuan2025reinforce,xu2025stepwise,li2025dancing,du2023improving}, these methods typically assume full access to all reasoning traces and treat all agents as equally reliable. By contrast, UA-SCA is designed for the federated setting, where client data remains private and reliability is query-dependent, by coupling uncertainty-guided trust weighting with structured cross-client critique.

\textbf{Stage 3: Synthesis.} Finally, the \textsf{Aggregate} function (Algorithm~\ref{alg:UA-SCA}, line~9) instructs the server-side LLM to synthesize the revised traces into a single coherent reasoning path and final answer. The server LLM receives all revised client traces annotated with their uncertainty-derived weights $\{w_i\}$ and produces a unified output: $(s_{\{1:T\},k+1,m}, a_{k+1,m}) = \textsf{Aggregate}\big( \{(\hat{s}_{\{1:T\},k,m}^i, \hat{a}_{k,m}^i, w_i)\}_{i=1}^L \big)$. The server LLM preferentially follows higher-weighted (lower-uncertainty) clients while incorporating complementary information from others. Unlike traditional federated learning, which averages numerical parameters, this aggregation is a \emph{language-level synthesis} operation performed entirely by the server LLM. Importantly, the server LLM acts as a synthesis tool whose role is to integrate client contributions rather than to reason independently; the objective is not to preserve a verbatim record of individual client reasoning traces, but to improve server-side prediction quality by synthesizing diverse and potentially conflicting reasoning signals into a more coherent and broadly supported rationale. An illustrative example of the full UA-SCA procedure is provided in Appendix~\ref{app:ua_sca_example}.
\begin{tcolorbox}[colback=blue!3!white, colframe=blue!50!black, boxrule=0.4pt, left=4pt, right=4pt, top=3pt, bottom=3pt]
\textbf{Robustness.} UA-SCA is robust to unreliable and low-quality clients by design: uncertainty weighting down-weights noisy contributions (formally justified by Theorem~\ref{thm:1}), while structured self-critique detects and revises flawed reasoning independently of the reported confidence. This provides defense against non-strategic noise (e.g., domain mismatch, incomplete data); robustness to strategic adversaries that deliberately manipulate uncertainty scores would require additional adversarial mechanisms and is left as future work.
\end{tcolorbox}

\label{sec:aggregation}

\begin{figure*}[t]
    \begin{minipage}{\linewidth}
        \centering
        \includegraphics[width=\linewidth]{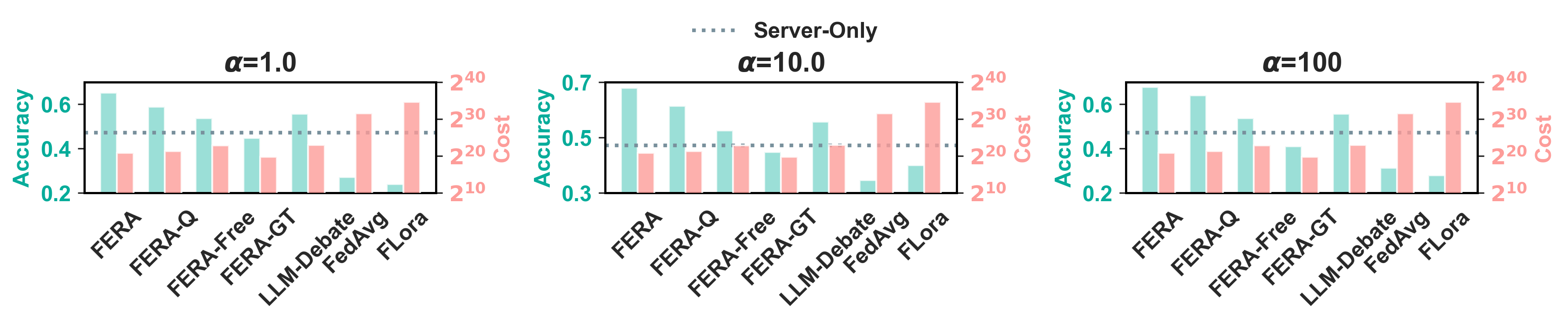}
        \vspace{-0.3in}
        \captionof{figure}{\footnotesize Performance comparison of \textsc{FERA} and its variants against baseline methods on the \textsc{MMLU-Pro} benchmark under varying levels of client-level data heterogeneity ($\alpha \in \{1.0, 10, 100\}$).}
        \label{fig:main_result}
    \end{minipage}
    \begin{minipage}{0.49\textwidth}
    \centering
   \includegraphics[width=\linewidth]{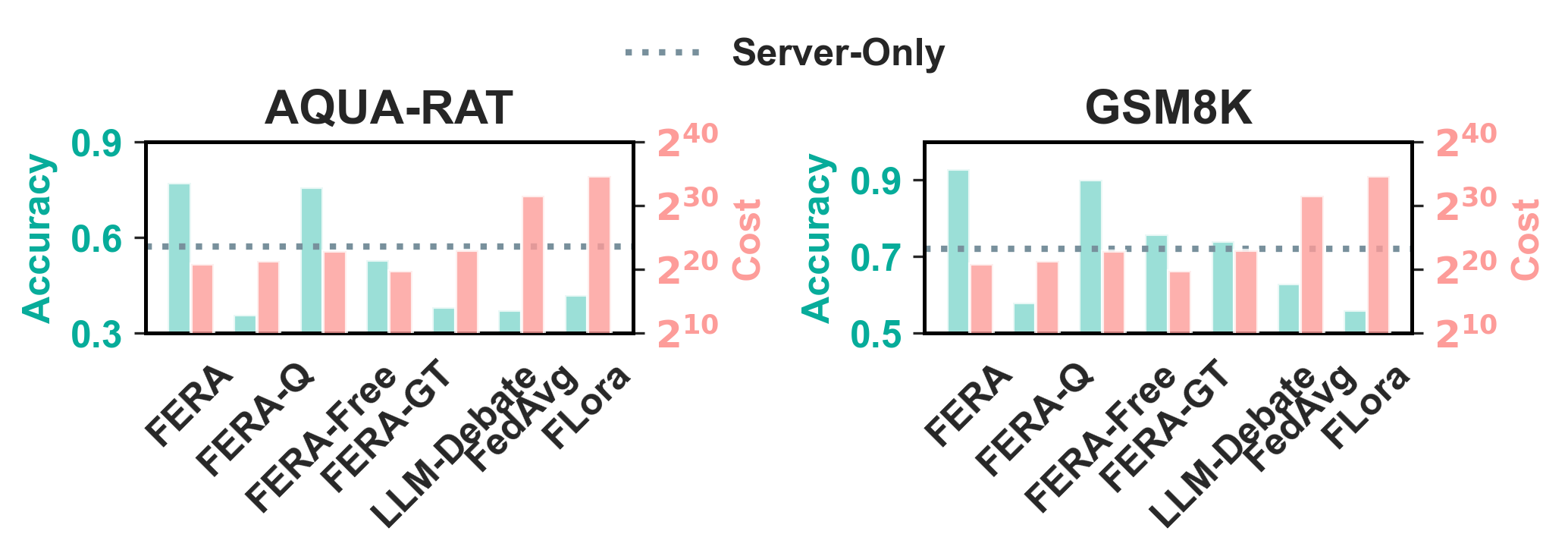}
   \vspace{-0.3in}
    \caption{\footnotesize Performance comparison of FERA and its variants against other baselines across two benchmarks: AQUA-RAT and GSM8K.}
    \label{fig:Main_results_2}
\end{minipage}
    \centering
    \begin{minipage}{0.49\textwidth}
        \centering
        \includegraphics[width=\linewidth]{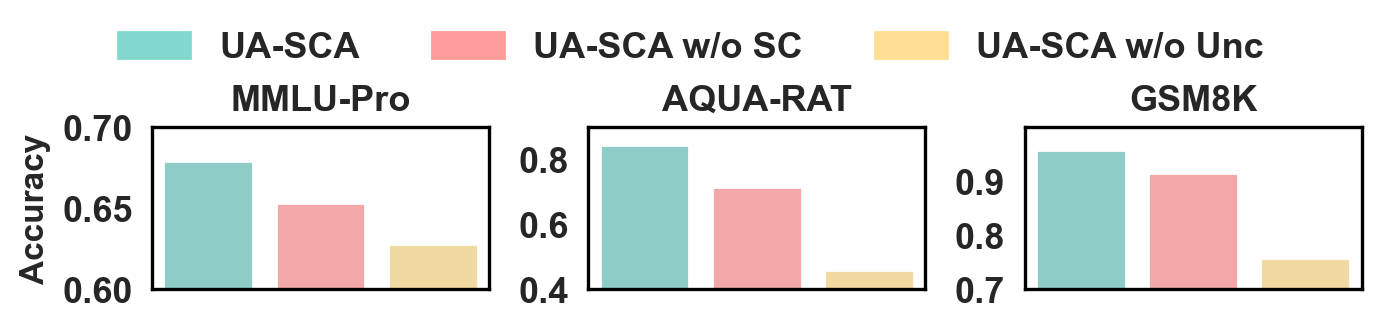}
        \vspace{-0.1in}
        \caption{\footnotesize Performance of FERA under different aggregation strategies across different benchmarks.}
        \label{fig:abl_aggre}
    \end{minipage}
    \hfill
    \vspace{-0.1in}
\end{figure*}
\subsection{Theoretical Analysis}
\label{sec:theory}

The notation used below is defined in Section~\ref{sec:Preliminaries}. While the iterative protocol and uncertainty weighting in FERA are motivated by practical considerations, it is natural to ask whether they admit formal justification. To address this question, we analyze Algorithm~\ref{alg:FERA} under a simplified setting that captures the core aggregation dynamics. Specifically, we consider a setting where the LLM predicts only the final answer, without intermediate reasoning steps, for linear regression tasks with $q \in \RR^d$ and $a \in \RR$. Following \citet{zhang2023trained}, each client's LLM is modeled as a single-layer linear self-attention (LSA) model (see Appendix~\ref{app:theory} for details). We assume $q_m, q_n^i \sim N(0, \Lambda)$ with $\Lambda \succ 0$, and that client $i$'s answers satisfy $a_n^i = (q_n^i)^\top \theta + \epsilon_n^i$, where $\epsilon_n^i \sim N(0, \sigma_i^2)$. The heterogeneous noise variances $\sigma_i^2$ capture variation in client reliability. Under this model, the server aggregates client predictions via weighted averaging: $a_{k+1,m} = \sum_{i=1}^L w_{k+1,m}^i a_{k+1,m}^i$.

\begin{theorem}\label{thm:1}
Set the weights $w_{k,m}^i: = w_m^i$ to be weights independent of the rounds and the initial answers $a_{0, m} = 0$. Then for Algorithm \ref{alg:FERA}, at every round $k \in [K]$, we have:
\vspace{-0.1in}
\begin{itemize}[leftmargin = *, nosep]
    \item $a_{k,m} = \theta_k^\top q_m$ for all $m \in [M]$.
    \item With probability at least $1-\delta$ for some $\delta\in (0,1)$, the difference between $\theta_k$ and $\theta$ can be bounded by
\begin{small}
\begin{equation}
\|\theta_k - \theta\|
\leq O\!\left(
\sqrt{\tfrac{d \log (L\delta^{-1})}{\min\{M,N\}}}
+ \sqrt{\tfrac{d \log(L\delta^{-1})}{N}}
+ \sqrt{\tfrac{d \log(L\delta^{-1})}{N}}
\, \lambda_{\min}^{-1/2}(\Lambda)
\sum_{i=1}^L w^i_{m} \sigma_i
\right).
\label{help:443}
\end{equation}
\end{small}

\end{itemize}
\end{theorem}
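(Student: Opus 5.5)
The plan is to fix the linear self-attention model from \citet{zhang2023trained} in its converged form: given demonstrations $\{(x_j,y_j)\}_{j=1}^n$ and a query $x$, it predicts
\[
\hat y = x^\top \Gamma \tfrac{1}{n}\textstyle\sum_j x_j y_j ,
\]
with $\Gamma$ essentially $\Lambda^{-1}$. This is a linear functional of the query whose coefficient vector is $\Gamma\hat\Sigma_{xy}$. I will track, for each client $i$ and round $k$, a coefficient vector $\theta_k^i$ such that client labels satisfy $a^i_{k,m} = (\theta_k^i)^\top q_m$. The aggregated vector is then $\theta_k=\sum_i w_m^i\theta_k^i$.

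\textbf{Step 1: linearity.} I will prove the first bullet by induction on $k$. The base case holds with $\theta_0=0$. Suppose server answers are $a_{k,m}=\theta_k^\top q_m$. In local refinement, client $i$ prompts with server demonstrations $(q_m, \theta_k^\top q_m)$, so its refined labels on its own queries are
\[
(q_n^i)^\top \Gamma \hat\Lambda_Q \theta_k, \qquad \hat\Lambda_Q=\tfrac1M\textstyle\sum_m q_mq_m^\top .
\]
The enriched dataset $D_k^i$ mixes the original noisy labels $(q_n^i)^\top\theta+\epsilon_n^i$ with these refined labels. Client labeling of $q_m$ therefore gives $q_m^\top \theta^i_{k+1}$, where
\[
\theta^i_{k+1}=\Gamma\hat\Lambda_i\big(\alpha\theta+(1-\alpha)\Gamma\hat\Lambda_Q\theta_k\big)+\alpha\,\Gamma\tfrac1N\textstyle\sum_n q_n^i\epsilon_n^i .
\]
Here $\hat\Lambda_i$ is the client's empirical covariance and $\alpha$ is the mixing fraction (e.g.\ $1/2$). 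I will treat demonstration selection as using all available data, not MMR, since MMR is outside the analyzable model. With round-independent weights, the weighted average $\theta_{k+1}=\sum_i w_m^i\theta^i_{k+1}$ is again a fixed vector. This proves the first bullet, provided the weights do not depend on $m$ in a way that breaks this; otherwise I index $\theta_k$ by the weight profile.

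\textbf{Step 2: the contraction recursion.} Write $\Gamma\hat\Lambda_i=I+E_i$ and $\Gamma\hat\Lambda_Q=I+E_Q$. Matrix concentration for Gaussian covariances, with a union bound over the $L$ clients and the server, gives
\[
\|E_i\|\lesssim\sqrt{d\log(L/\delta)/N}, \qquad \|E_Q\|\lesssim\sqrt{d\log(L/\delta)/M}.
\]
Substituting into the expression for $\theta^i_{k+1}$ yields a recursion of the form
\[
\theta_{k+1}-\theta=(1-\alpha)(\theta_k-\theta)+O(\|E\|)\,(\|\theta\|+\|\theta_k\|)+\alpha\textstyle\sum_i w^i\,\Gamma\tfrac1N\sum_n q^i_n\epsilon^i_n .
\]
The leading factor $1-\alpha<1$ makes this a contraction. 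Unrolling it from $\theta_0=0$ gives a geometric sum, so the bound does not grow with $k$. It also keeps $\|\theta_k\|$ uniformly bounded, which controls the cross term. The perturbation terms produce the contributions $\sqrt{d\log(L/\delta)/\min\{M,N\}}$ and $\sqrt{d\log(L/\delta)/N}$.

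\textbf{Step 3: the noise term.} For client $i$, $\frac1N\sum_n q^i_n\epsilon^i_n$ is a sum of sub-exponential vectors with scale $\sigma_i\|\Lambda^{1/2}\|$. After multiplying by $\Gamma\approx\Lambda^{-1}$, its norm is at most
\[
\sigma_i\sqrt{d\log(L/\delta)/N}\;\lambda_{\min}^{-1/2}(\Lambda).
\]
Taking the union bound over clients and applying the triangle inequality with the weights gives $\sum_i w^i_m\sigma_i$ times this factor, which is the final term of \eqref{help:443}.

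\textbf{Main obstacle.} I expect the hardest step to be Step 2. The recursion must be shown to be a genuine contraction uniformly in $k$, while the sampling errors $E_i$ and $E_Q$ are reused in every round, so they are correlated rather than fresh. My plan is to condition once on the high-probability event where all covariance bounds hold. On that event the map $\theta_k\mapsto\theta_{k+1}$ is deterministic and affine, with operator norm at most $1-\alpha+O(\|E\|)<1$ once $N$ and $M$ are large enough. A fixed-point argument on this affine map then bounds $\|\theta_k-\theta\|$ for all $k$ simultaneously.
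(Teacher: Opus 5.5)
Your proposal follows the paper's proof essentially step for step: the same induction with the $\tfrac12$ mix of original and refined local labels, the same $I+E$ perturbation of the empirical covariances with a union bound over clients, the same contraction factor $\rho=\tfrac12+O(\|E\|)<1$ applied on a single high-probability event, and the same $\Lambda^{-1/2}$-whitened concentration of the noise term that yields $\lambda_{\min}^{-1/2}(\Lambda)\sum_i w^i_m\sigma_i$. The paper's proof has two caveats that yours shares. First, unrolling from $\theta_0=0$ actually leaves a transient term of order $\rho^k\|\theta\|$, which the stated bound omits. Second, the single-$\theta_k$ claim genuinely needs weights independent of $m$, as you rightly flag.
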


Theorem~\ref{thm:1} has two main implications. First, the iterative protocol converges: as $M,N$ increase, the server's answers approach the ground truth. Second, the error bound depends on $\sum_{i=1}^L w_m^i \sigma_i$, which is minimized when the weights are inversely proportional to $\sigma_i$. This aligns with Eq.~\eqref{eq:weight}, where clients with higher noise (and thus higher uncertainty) receive lower weights. The theorem therefore provides principled support for the uncertainty-aware weighting design under the simplified model. Although this analysis is derived under a tractable setting, two empirical findings suggest that its predictions extend more broadly. First, the ablation study in Section~\ref{sec:4.4} shows that removing uncertainty weighting consistently degrades performance across all benchmarks, consistent with the theorem's prediction that uniform weights ($w^i = 1/L$) are suboptimal. Second, the demonstration-quantity study in Appendix~\ref{app:exp_analysis} shows that increasing $N$ improves accuracy, matching the $O(1/\sqrt{N})$ convergence rate in Eq.~\eqref{help:443}.

\section{Experiment}
In this section, we first introduce the overall experimental setup. We then present a series of experiments designed to answer specific research questions, with each question and its corresponding results discussed in dedicated subsections.

\begin{itemize}[leftmargin=*, nosep]
\item \textbf{RQ1.} How do FERA and its variants perform on general and mathematical reasoning benchmarks under heterogeneous client data, including settings where clients have domain-specific expertise, compared with existing baselines?
\item \textbf{RQ2.} What is the effect of techniques such as Uncertainty-Aware Aggregation and Iterative Refinement on the performance of FERA?
\item \textbf{RQ3.} How do experimental factors such as the aggregation strategy, number of iterative update rounds, choice of backbone language model, specialized-domain settings, demonstration selection strategy, number of clients, presence of low-quality clients, and in-context length affect the performance of FERA?
\end{itemize}
\subsection{Experimental Setup}
\paragraph{Benchmarks.} We evaluate on three reasoning benchmarks: \textbf{MMLU-Pro} \citep{wang2024mmlu} (12K+ college-level questions across 14 topics), \textbf{AQUA-RAT} \citep{ling2017program} (100K+ algebraic word problems), and \textbf{GSM8K} \citep{cobbe2021training} (8.5K grade-school math problems). For MMLU-Pro, we select five questions per category as the server query set and partition the remainder among clients using a Dirichlet distribution \citep{hsu2019measuring} with $\alpha \in \{1.0, 10, 100\}$ to model varying heterogeneity levels. For AQUA-RAT and GSM8K, 70 problems form the server query set, and the rest are uniformly distributed among clients, providing a homogeneous data setting. Details are in Appendix~\ref{app:exp_setup}.

\textbf{Evaluation and Models.} We report accuracy as the primary metric \citep{team2023gemini,touvron2023llama} and computation/communication costs (Table~\ref{tab:cost_concrete}). Client-side models are Qwen3-4B \citep{yang2025qwen3} and LLaMA-3.1-8B \citep{dubey2024llama}; the server uses GPT-4o-mini for aggregation \citep{du2023improving}.

\subsection{Baselines}
\begin{wrapfigure}{r}{0.5\textwidth}
    \centering
    \vspace{-0.15in}
    \includegraphics[width=0.5\textwidth]{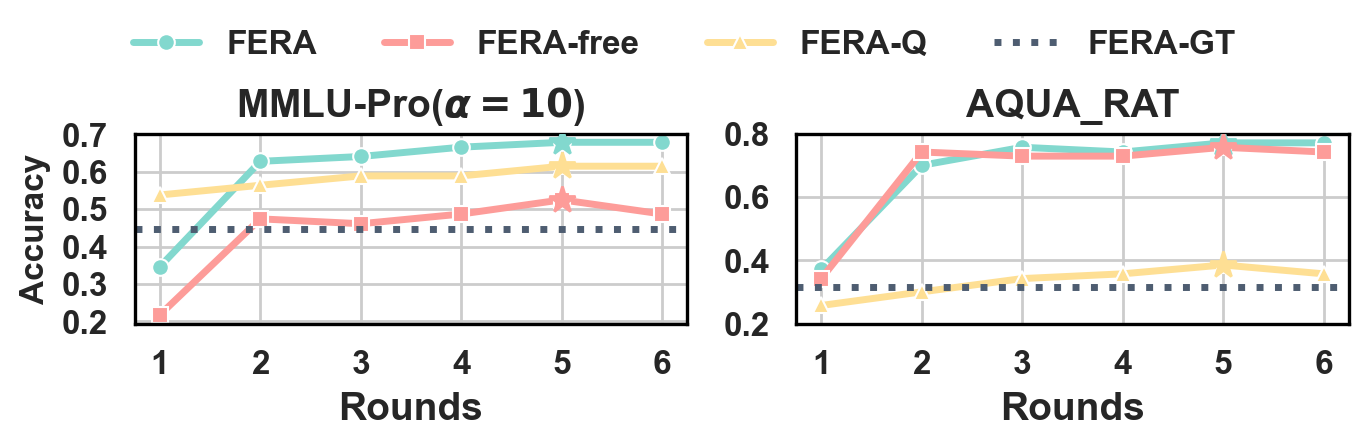}
    \vspace{-0.25in}
    \caption{\footnotesize Effect of interaction round count on FERA and FERA-Free in the MMLU-Pro benchmark.}
    \label{fig:abl_rounds}
    \vspace{-0.15in}
\end{wrapfigure}
\label{sec:baseline}
\textbf{External Baselines.} We evaluate FERA against two categories of baselines: federated learning (FL) methods and parameter-free methods. The FL baselines include \textbf{FedAvg} \citep{mcmahan2017communication,ye2024openfedllm} and \textbf{FloRA} \citep{wang2024flora}, while the parameter-free baseline is \textbf{LLM-Debate} \citep{du2023improving}. In addition, we consider a \textbf{Server-only} baseline, where the server LLM is directly applied to the benchmarks without any client collaboration.

\textbf{FERA Variants.} We introduce variants to isolate different components (details in Appendix~\ref{app:FERA_variants}): \textbf{FERA-GT} uses fixed ground-truth demonstrations without iterative updates (upper bound); \textbf{FERA-Q} omits intermediate reasoning steps to isolate their contribution; \textbf{FERA-Free} assumes clients have only questions without answers (low-resource setting).

\subsection{Evaluation Results}
\noindent The main results (Figures~\ref{fig:main_result}--\ref{fig:Main_results_2}) use LLaMA3.1-8B as the client model; Qwen3-4B results are in Appendix~\ref{app:exp_qwen}. FERA consistently outperforms all baselines with substantially lower communication overhead than training-based methods.

\textbf{Impact of Data Heterogeneity.} Figure~\ref{fig:main_result} shows that FERA consistently outperforms all baselines on MMLU-Pro across a wide range of heterogeneity levels. Using Dirichlet splits with $\alpha \in \{1.0, 10, 100\}$, we observe that performance improves as $\alpha$ increases, indicating that more homogeneous client distributions lead to more stable demonstration retrieval and fewer conflicting updates. FERA also performs strongly on AQUA-RAT and GSM8K under homogeneous settings (Figure~\ref{fig:Main_results_2}). We further evaluate an extreme heterogeneity setting in Appendix~\ref{app:spec_domain}.
\vspace{-0.4em}

\textbf{Impact of Iterative Refinement.}  Figure~\ref{fig:main_result} shows that FERA consistently improves with more interaction rounds, outperforming both Server-Only and FERA-GT (which uses fixed local context without iterative updates). This highlights the benefit of iterative refinement for integrating distributed knowledge.
\vspace{-0.4em}

\textbf{Impact of Reasoning Process.} FERA-Q performs substantially worse than FERA, especially on mathematically intensive benchmarks such as MMLU-Pro and AQUA-RAT. By restricting the model to final-answer prediction without intermediate reasoning steps, FERA-Q removes the structured problem-solving process needed for multi-step tasks, leading to lower accuracy and reduced interpretability. These results underscore the value of explicit reasoning traces in logic-intensive settings.
\vspace{-0.4em}

\textbf{Impact of Server LLM.} We verify that gains stem from framework design rather than the server model. The Server-Only baseline (same GPT-4o-mini, no clients) and LLM-Debate (same server for summarization) both underperform FERA, and the ``UA-SCA without Self-Critique'' ablation shows that even reducing the server to simple weighted synthesis still outperforms naive aggregation. 


\begin{wraptable}{r}{0.53\textwidth}
\centering
\vspace{-0.15in}
\caption{Total computation cost comparison.}
\label{tab:cost_concrete}
\footnotesize
\begin{tabular}{cccc}
\toprule
\textbf{Method} & \textbf{Rounds} & \textbf{FLOPs} & \textbf{Training?} \\
\midrule
FedAvg & 50 & $7.4 \times 10^{17}$ & Yes \\
FLoRA & 50 & $2.5 \times 10^{17}$ & Yes \\
LLM-Debate & 6 & $4.1 \times 10^{16}$ & No \\
FERA & 6 & $8.9 \times 10^{16}$ & No \\
\bottomrule
\end{tabular}
\vspace{-0.15in}
\end{wraptable}
\vspace{-0.4em}

\textbf{Computation and Communication Cost.} Using only forward inference over $K{=}6$ rounds, FERA requires $8\times$ fewer FLOPs than FedAvg and $3\times$ fewer than FLoRA (Table~\ref{tab:cost_concrete}). Although its FLOPs are slightly higher than LLM-Debate due to local refinement, this additional cost yields substantially better accuracy. Communication overhead is also low (Figures~\ref{fig:main_result}--\ref{fig:Main_results_2}), as FERA transmits only reasoning traces and uncertainty scores, without parameter exchange or additional client revision steps. Detailed cost formulas are provided in Appendix~\ref{app:cost}.

\subsection{Ablation Studies}\label{sec:4.4}
\textbf{Effect of Uncertainty-Aware Aggregation.} We compare three server-side aggregation strategies (Figure~\ref{fig:abl_aggre}): (i) UA-SCA; (ii) UA-SCA without Self-Critique; and (iii) UA-SCA without Uncertainty. UA-SCA consistently outperforms both ablations, confirming that uncertainty weighting (identifying \emph{which} clients to trust) and self-critique (correcting \emph{what} is wrong in reasoning) provide complementary gains. Even without self-critique, incorporating uncertainty yields clear improvements over naive aggregation, consistent with Theorem~\ref{thm:1}.

\textbf{Effect of Iterative Refinement.} Figure~\ref{fig:abl_rounds} compares FERA, FERA-Free, and FERA-GT across communication rounds. FERA and FERA-Free perform worse than FERA-GT in early rounds, as FERA-GT benefits from fixed, high-quality local exemplars. As rounds progress, the UA-SCA mechanism reduces noise in client updates, leading to consistent gains. This confirms that iterative refinement enables FERA to progressively integrate information from heterogeneous clients.

\vspace{0.2em}
\colorbox{yellow!30}{\parbox{\dimexpr\linewidth-2\fboxsep}{\textbf{Additional ablations on model capacity heterogeneity, uncertainty characteristics, demonstration quality, demonstration selection strategy, specialized domains, demonstration quantity, and the number of clients are reported in Appendix~\ref{app:exp_analysis}.}}}

\section{Conclusion}
We presented FERA, a training-free federated reasoning framework for improving LLM reasoning across heterogeneous clients with private data, without centralized training or data exchange. FERA combines iterative server--client co-refinement with UA-SCA, which uses uncertainty-aware weighting and structured cross-client verification to revise flawed reasoning. Theory shows that the iterative process converges and that uncertainty-aware weighting minimizes the error bound, while experiments demonstrate consistent gains over both training-based and training-free baselines on general and mathematical reasoning benchmarks. Important future directions include adversarial robustness and mixed-architecture deployments.

\bibliography{colm2026_conference}
\bibliographystyle{colm2026_conference}
\clearpage
\appendix

\begingroup
\linespread{1.3}\selectfont

\begin{center}
\textbf{\large Table of Contents for Appendix}
\end{center}
\vspace{0.3em}

\newcommand{\appsec}[3]{\noindent\hyperref[#1]{\textbf{#2\hspace{0.5em}#3}} \dotfill \pageref{#1}}
\newcommand{\appsubsec}[3]{\noindent\phantom{AA}\hyperref[#1]{#2\hspace{0.5em}#3} \dotfill \pageref{#1}}

\appsec{app:related_work}{A}{Additional Related Work}

\vspace{6pt}
\appsec{app:algorithm}{B}{Algorithm Details}\\[2pt]
\appsubsec{app:algorithm}{B.1}{Detailed FERA Workflow}\\[2pt]
\appsubsec{app:UA-WA}{B.2}{Uncertainty-Aware Aggregation}

\vspace{6pt}
\appsec{app:theory}{C}{Proof in Section~\ref{sec:3}}\\[2pt]
\appsubsec{app: pre}{C.1}{Preliminaries for the Theoretical Analysis}\\[2pt]
\appsubsec{app:thm1_proof}{C.2}{Proof of Theorem~\ref{thm:1}}

\vspace{6pt}
\appsec{app:prompt}{D}{Prompt}\\[2pt]
\appsubsec{app:server_init}{D.1}{Server Query Dataset Initialize}\\[2pt]
\appsubsec{app:client_resp}{D.2}{Client Response Generation for Server Queries}\\[2pt]
\appsubsec{app:ua_agg}{D.3}{Uncertainty-Aware Aggregation}

\vspace{6pt}
\appsec{app:exp_setup}{E}{Experiment Setup}\\[2pt]
\appsubsec{app:FERA_variants}{E.1}{FERA Variants}\\[2pt]
\appsubsec{app:client_data}{E.2}{Construction of Client Datasets}\\[2pt]
\appsubsec{app:impl}{E.3}{Implementation Details}\\[2pt]
\appsubsec{app:cost}{E.4}{Communication and Computational Cost}\\[2pt]
\appsubsec{app:demo_select}{E.5}{Demonstration Selection}\\[2pt]
\appsubsec{app:unc_calc}{E.6}{Uncertainty Calculation}

\vspace{6pt}
\appsec{app:exp_analysis}{F}{Supplementary Experiments}\\[2pt]
\appsubsec{app:exp_qwen}{F.1}{Main Results Using Qwen3-4B}\\[2pt]
\appsubsec{app:exp_analysis}{F.2}{Additional Ablation Study}

\vspace{6pt}
\appsec{app:unc_design}{G}{Design Considerations for the Uncertainty Measure}

\vspace{6pt}
\appsec{app:privacy}{H}{Privacy Analysis}

\vspace{6pt}
\appsec{app:case_study}{I}{Case Study}

\endgroup
\clearpage
\paragraph{LLM Usage Disclosure.} AI-Generated Visualizations.
The framework illustration in Figure 1 and several associated icons were generated or refined using LLM-assisted image generation tools (ChatGPT) for illustrative purposes only. These visual elements are used solely for presentation and do not influence the methodology, experiments, or conclusions of the paper.
\section{Additional Related Work}
\label{app:related_work}
\paragraph{Federated Learning in LLM.} 
The growing scale of LLMs raises concerns around computational demands and data privacy \citep{sani2024future}. FL offers a solution by enabling collaborative model adaptation across decentralized data sources without sharing raw data. Several works explore FL for LLM fine-tuning and instruction tuning. \citet{fan2023fate} apply FL to standard generation tasks, while \citet{wu2024fedbiot} introduce a privacy-preserving framework designed for secure, decentralized adaptation. \citet{kuang2024federatedscope} focus on instruction tuning across heterogeneous clients to improve generalization. Beyond fine-tuning, prompting-based strategies such as Fed-SP-SC and Fed-DP-CoT \citep{liu2023federated} have shown that reasoning capabilities can be improved via federated aggregation of chain-of-thought responses, leveraging self-consistency and diversity without model updates. Collectively, these efforts underscore FL's growing role in enabling scalable, privacy-conscious LLM development.

\paragraph{Reasoning Large Language Models.} 
Large language models have demonstrated remarkable capabilities in complex reasoning tasks through various prompting and inference strategies. Chain-of-thought (CoT) prompting revealed that LLMs possess a latent capacity for multi-step reasoning, leading to substantial improvements in arithmetic and commonsense tasks \citep{yoran2023answering,yao2023tree,chu2023navigate}. Follow-up studies extended this idea in several ways. Zero-shot CoT requires only the addition of “Let us think step by step” yet recovers much of the original benefit \citep{wei2022chain}. Self-consistency generates multiple reasoning paths and selects the most frequent answer among them \citep{wang2022self}. Least-to-most prompting decomposes difficult problems into a sequence of simpler subquestions that are solved in order \citep{zhou2022least}. ReAct interleaves internal thoughts with environment actions, thereby combining reasoning and tool use \citep{yao2023react}. More recently, \citet{shinn2023reflexion} enable models to iteratively critique and refine their own outputs, further enhancing reliability and accuracy. 

\paragraph{Federated Reasoning for Large Language Models.} 
Federated reasoning with large language models (LLMs) has been explored through training-driven approaches that update parameters in distributed settings. Full-parameter methods such as FedKSeed reduce bandwidth by transmitting random seeds and scalar gradients rather than full model weights \citep{qin2023federated}. Parameter-efficient methods include FLoRA, which aggregates heterogeneous low-rank adapters \citep{wang2024flora}, FedSP, which exchanges lightweight soft prompts while preserving server-side privacy \citep{dong2023tunable}, and FedBiOT, which splits models into emulator–adapter pairs optimized via bi-level frameworks \citep{wu2024fedbiot}. Task-specific frameworks have also been developed, including FedIT for instruction tuning \citep{zhang2024towards}, FedID for interactive distillation \citep{ma2023fedid}, and federated reinforcement learning from human feedback using lightweight preference selector aggregation. These advances show that federated training can preserve privacy and communication efficiency while remaining competitive with centralized training \citep{wei2025federated,wu2024towards}.

\paragraph{Training-free Approaches for LLM Reasoning.}  Training-free approaches aim to enhance reasoning without parameter updates. \citet{liu2023federated} leverage synonymous user questions with self-consistency voting and chain-of-thought prompting, though performance depends heavily on data homogeneity and retrieval quality. \citet{chen2025can} aggregate textual feedback into shared prompts using a density-based method, but struggle with misaligned prompts under heterogeneous data. Other in-context learning approaches \citep{wang2025federated,wu2024federated} remain limited to simple QA and tool-use tasks, while debate-style frameworks \citep{du2023improving} risk privacy leakage through direct client communication and underutilize local datasets. Together, these lines of work highlight a trade-off: training-driven methods achieve stronger performance at higher communication and optimization costs, whereas training-free methods are lightweight but less robust under heterogeneous data and complex reasoning, motivating the development of new frameworks such as FERA.

\paragraph{Uncertainty used in Federated Learning.} 
Recent work has addressed key challenges in FL through uncertainty-based approaches. To handle data heterogeneity, \citet{koutsoubis2025privacy} survey privacy-preserving methods that integrate uncertainty quantification in federated medical imaging, demonstrating how uncertainty metrics guide robust aggregation under non-IID conditions. Addressing the complementary challenge of model transparency, \citet{zhang2025uncertainty} propose UncertainXFL, which incorporates uncertainty-aware explanations directly into the aggregation process to enhance interpretability. Model heterogeneity presents another significant challenge that has been tackled through uncertainty-guided knowledge transfer mechanisms. \citet{wang2024bridging} develop FedType, which leverages proxy models with uncertainty-based distillation to bridge architectural differences, while \citet{zhang2024uncertainty} introduce UEFL, a dynamic approach that adapts discrete codebooks based on uncertainty measures to accommodate diverse data distributions across silos.

\paragraph{Uncertainty used in Large Language Models.} 
LLMs have focused on both quantifying and leveraging uncertainty for enhanced reliability. \citet{hou2023decomposing} pioneer the decomposition of aleatoric and epistemic uncertainty through ensemble methods over clarified input variants, achieving interpretable uncertainty estimates without requiring architectural modifications. This decomposition framework provides a foundation for more nuanced uncertainty-aware decision-making in downstream tasks. Building on this quantification approach, \citet{huang2024unlocking} demonstrate that output inconsistencies under label injection scenarios effectively capture intrinsic model uncertainty—a finding that directly enables active learning strategies for optimal in-context example selection. Further investigating uncertainty dynamics, \citet{wang2025uncertainty} reveal that increased exposure to in-context examples systematically reduces predictive uncertainty, with particularly pronounced effects on epistemic uncertainty. This reduction mechanism explains the observed improvements in both model confidence and accuracy as context size increases. Integrating these theoretical insights into practice, \citet{yang2023improving} operationalize an uncertainty-aware framework that empowers models with adaptive behavior: either self-correcting predictions when uncertainty is manageable or abstaining entirely when uncertainty exceeds predefined reliability thresholds.
\begin{algorithm*}[t]
\caption{Uncertainty-Aware Federated Reasoning}
\label{alg:FERA}
\begin{algorithmic}[1]
\REQUIRE 
$L$ clients, each with local dataset $D^i = \{(q_n^i, s_{\{1:T\},n}^i, a_n^i)\}_{n=1}^N$ and local model $\text{LLM}^i$. 
Server holds initial query set $\{q_m\}_{m=1}^M$.
\STATE Initialize server query set $Q_1 = \{(q_m, s_{\{1:T\},1,m}, a_{1,m})\}_{m=1}^M$.
\FOR{each round $k = 1,\dots,K$}
    \STATE \textit{Step 1:} Server distributes $Q_k$ to all clients. 
    \FOR{each client $i = 1,\dots,L$}
        \STATE \textit{Step 2:} 
        Refine local reasoning–answer pairs using demonstrations from $Q_k$. Form the enhanced dataset:
        \vspace{-0.4em}
        \[
        D_k^i \leftarrow D^i \cup \{(q_n^i, s_{\{1:T\},k,n}^{\prime i}, a_{k,n}^{\prime i})\}_{n=1}^N.
        \]
        \vspace{-1.5em}
        \STATE \textit{Step 3:} 
        Predict reasoning–answer pairs for server queries using demonstrations from $D_k^i$.  
        Compute uncertainty scores from the predictive distribution of $\text{LLM}^i$, and return to the server:
        \vspace{-0.4em}
        \[
        \{(s_{\{1:T\},k+1,m}^i, a_{k+1,m}^i, u_{k+1,m}^i)\}_{m=1}^M.
        \]
        \vspace{-1.8em}
    \ENDFOR
    \STATE \textit{Step 4:} 
    The server aggregates client responses using an uncertainty-aware scheme; see Section~\ref{sec:aggregation} for aggregation procedures under different algorithmic settings.

    \textbf{Update}
    \vspace{-0.4em}
    \[
    Q_{k+1} = \{(q_m, s_{\{1:T\},k+1,m}, a_{k+1,m})\}_{m=1}^M.
    \]
    \vspace{-1.5em}
\ENDFOR
\ENSURE Final predictions $\{(q_m, s_{\{1:T\},K+1,m}, a_{K+1,m})\}_{m=1}^M$.
\end{algorithmic}
\end{algorithm*}
\section{Algorithm Details}
\subsection{Detailed FERA Workflow}
\label{app:algorithm}
FERA follows a round-based federated reasoning workflow in which a central server iteratively refines its reasoning outputs through collaboration with multiple clients holding heterogeneous and private data. In each round, the server distributes its current query set to all clients, and each client uses local demonstrations to generate updated reasoning–answer pairs together with uncertainty estimates, without sharing raw data or model parameters. These client responses are then returned to the server and aggregated using an uncertainty-aware scheme to update the server’s reasoning results for the next round. Through this iterative process, FERA improves global reasoning quality by leveraging client-side knowledge and uncertainty signals, while remaining training-free, communication-efficient, and privacy-preserving. The overall workflow is summarized in Algorithm~\ref{alg:FERA}.

\begin{tcolorbox}[colback=blue!3!white, colframe=blue!50!black, boxrule=0.4pt, left=4pt, right=4pt, top=3pt, bottom=3pt]
\remark FERA does not inherently require strict synchronous participation. In asynchronous settings, Algorithm~\ref{alg:FERA} can be adapted by modifying line~8 to aggregate responses only from the subset of clients whose updates arrive within the current round. While a fully asynchronous variant would require a redesigned aggregation rule that jointly accounts for both uncertainty and update staleness, such an extension is orthogonal to the main focus of this work. Our goal is to introduce a training-free federated reasoning framework that leverages client-side data characteristics to guide server decision making.
\end{tcolorbox}

\subsection{Uncertainty-Aware Aggregation}
\subsubsection{Uncertainty-Aware Weighted Aggregation (UA-WA)}
\label{app:UA-WA}
\begin{figure}[h]
    \centering
    \includegraphics[width=0.5\linewidth]{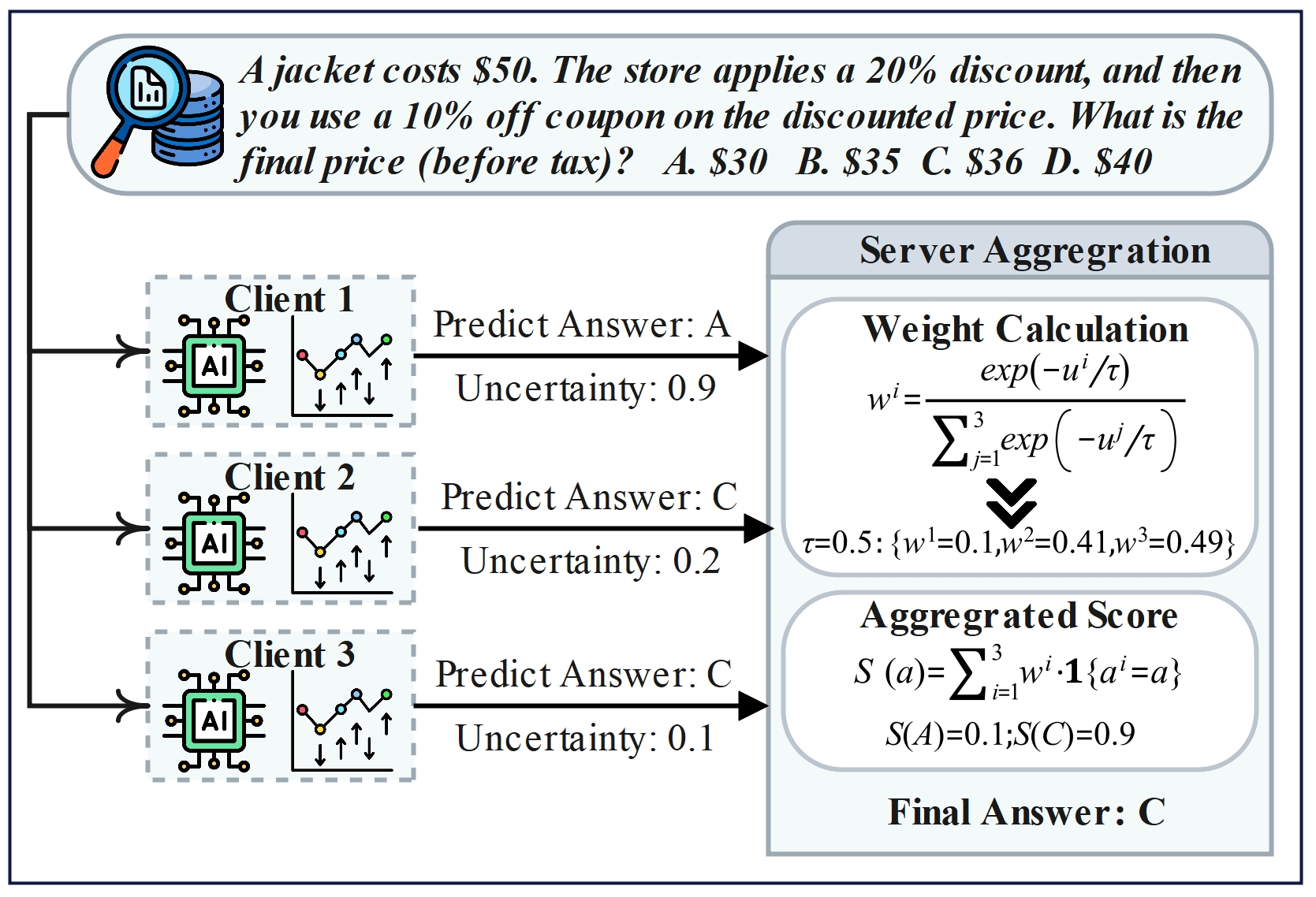}
    \caption{Illustration of UA-WA. For a given query, multiple clients generate candidate answers with associated uncertainty estimates. The server assigns higher weights to client responses with lower uncertainty and aggregates them accordingly, reducing the influence of unreliable or domain-mismatched predictions.}
    \label{fig:UA-WA}
\end{figure}

We first introduce the uncertainty-aware aggregation idea in a simplified setup where the reasoning steps $s_{\{1:T\}, k, m}^i$ are omitted and the task involves only questions $q$ and answers $a$. At round $k$, each of the $L$ clients uploads a predicted answer $a_{k+1,m}^i$ together with an associated uncertainty score $u_{k+1,m}^i \geq 0$ for query $q_m$. The server then calculates a weight for each client prediction using a temperature-scaled softmax over the negative uncertainties, so that predictions with lower uncertainty receive greater weight:
\begin{align}
    w_{k+1,m}^i = \frac{\exp(-u_{k+1,m}^i / \tau)}{\sum_{j=1}^L \exp(-u_{k+1,m}^j / \tau)}, 
\quad \tau > 0,\label{eq:www}
\end{align}
where $\tau$ is the temperature parameter. For each candidate answer $a \in \mathcal{A}$, the aggregated score is defined as $S_{k+1,m}(a) = \sum_{i=1}^L w_{k+1,m}^i \cdot \mathbf{1}\{a_{k+1,m}^i = a\}$, 
and the final prediction is selected as $a_{k+1,m} = \arg\max_{a \in \mathcal{A}} S_{k+1,m}(a)$.
The updated query--answer set is then $Q_{k+1} = \{(q_m, a_{k+1,m})\}_{m=1}^M$. Figure \ref{fig:UA-WA} illustrates the UA-WA process, where client predictions are weighted by their estimated uncertainty before aggregation.

\begin{tcolorbox}[colback=blue!3!white, colframe=blue!50!black, boxrule=0.4pt, left=4pt, right=4pt, top=3pt, bottom=3pt]
\remark The vanilla aggregation scheme is recovered as a special case of Eq.~(3.3) by setting $w_{k+1,m}^i = 1$ for all $i$, which reduces the procedure to simple majority voting. The uncertainty-aware variant generalizes this by weighting each client's answer according to its estimated uncertainty. This is particularly useful when an infrequent answer is nonetheless produced with consistently low uncertainty. In such cases, Eq.~(3.3) naturally upweights reliable yet rare signals while downweighting responses with high uncertainty. This highlights the role of uncertainty as an effective calibration mechanism within the aggregation process.
\end{tcolorbox}

\begin{tcolorbox}[
  enhanced,
  breakable,
  colback=lightblue!10, colframe=lightblue!90!black,
  title={Illustrative Example of UA-SCA},
  fonttitle=\bfseries
]
\textbf{Server query:} \emph{``Is tomato a fruit or a vegetable?''}

\textbf{Client submissions} (each client holds data from a different domain):

$\bullet$ \textbf{Client~1} (botany data, $u{=}0.08$): ``In botany, a tomato develops from the flower ovary and contains seeds.'' $\rightarrow$ Fruit.

$\bullet$ \textbf{Client~2} (culinary data, $u{=}0.82$): ``Tomatoes are commonly used in salads, sauces, and savory dishes rather than desserts.'' $\rightarrow$ Vegetable.

$\bullet$ \textbf{Client~3} (restaurant data, $u{=}0.88$): ``Tomatoes are categorized with vegetables on most restaurant menus and grocery labels.'' $\rightarrow$ Vegetable.

Without uncertainty, majority voting would select Vegetable (2 vs 1), which is contextually common but scientifically incorrect.

\textbf{Step 1 -- Grouping.} Responses are partitioned by final answer: Group~A (Fruit): Client~1; Group~B (Vegetable): Clients~2, 3.

\textbf{Step 2 -- Summarize.} Group~A: ``Botanically, tomatoes develop from flowers and contain seeds, which classifies them as fruits.'' Group~B: ``Tomatoes are widely treated as vegetables in cooking and food organization.''

\textbf{Step 3 -- SelfCritique.} Each client's trace is evaluated against the opposing group's summary and revised if contradicted:

$\bullet$ \textbf{Client~1}: Group~B discusses culinary usage, which does not contradict the botanical definition. \emph{No revision.} Revised trace: ``In botany, a tomato develops from the flower ovary and contains seeds.'' $\rightarrow$ Fruit.

$\bullet$ \textbf{Client~2}: Group~A provides a biological definition that conflicts with Client~2's assumption that culinary usage determines category. \emph{Partially revised.} Revised trace: ``Tomatoes are commonly treated as vegetables in cooking, although botanically they are fruits.'' $\rightarrow$ Fruit.

$\bullet$ \textbf{Client~3}: Group~A's evidence prompts reconsideration; grocery labeling reflects usage rather than scientific classification. \emph{Revised.} Revised trace: ``Tomatoes are often grouped with vegetables in restaurants and stores, but botanically they are fruits.'' $\rightarrow$ Fruit.

After self-critique, all three clients answer Fruit. However, this alone does not guarantee the correct outcome in general; the uncertainty weights provide a further safeguard.

\textbf{Step 4 -- Aggregate.} The server LLM receives the three revised traces with uncertainty weights $w_1{=}0.56$, $w_2{=}0.24$, $w_3{=}0.20$ (Eq.~\ref{eq:weight}; Client~1's low uncertainty yields the highest weight). It synthesizes a single reasoning chain: ``Although tomatoes are commonly treated as vegetables in culinary settings, botanically they develop from flowers and contain seeds, which classifies them as fruits.'' 

\textbf{Output:} Fruit.
\end{tcolorbox}
\label{app:ua_sca_example}

\subsubsection{Uncertainty-Aware Self-Critique Aggregation (UA-SCA)}
UA-SCA addresses disagreement among heterogeneous client reasoning paths by performing aggregation directly in the space of multi-step reasoning. For a given query, each client generates a reasoning trace and final answer together with an uncertainty score derived from its token-level predictive distribution, which reflects query-dependent reliability. UA-SCA first groups client responses by their final answers and summarizes the dominant reasoning pattern within each group using a server-side LLM. Each client’s reasoning–answer pair is then refined through self-critique by comparing it against summaries from alternative answer groups, allowing logical inconsistencies to be exposed and complementary intermediate steps to be incorporated. Finally, the revised reasoning paths are aggregated using uncertainty-aware weights that downweight unreliable or domain-mismatched predictions while amplifying confident and well-supported ones. By combining structured self-critique with uncertainty-aware weighting, UA-SCA resolves conflicts among heterogeneous reasoning paths without relying on majority agreement, while preserving informative intermediate logic throughout iterative federated reasoning rounds; the complete workflow of UA-SCA is described in Algorithm~\ref{alg:UA-SCA}.

We provide a step-by-step walkthrough below to illustrate how data heterogeneity causes conflicting reasoning, how self-critique partially resolves it, and how uncertainty weighting handles residual disagreements.

\section{Proof in Section \ref{sec:3}}\label{app:theory}

\subsection{Preliminaries for the Theoretical Analysis}\label{app: pre}

In this section, we lay out the basic formulation of in-context learning (ICL) for function classes, building on \cite{garg2022can,zhang2023trained}.

In ICL, the model processes and exploits sequential input called a prompt. A prompt consists of a sequence of input–output pairs $(x_1, y_1, \ldots, x_N, y_N, x_{\text{query}})$, where each $y_i$ corresponds to the evaluation of an unknown target function $h$ at $x_i$. Given such a sequence, the model’s task is to infer useful information from the examples and generate a prediction $\hat{y}(x_{\text{query}})$ for the query point $x_{\text{query}}$, aiming for $\hat{y}(x_{\text{query}}) \approx h(x_{\text{query}})$.

We now present a formal definition for models that learn from in-context examples, following \cite{zhang2023trained}.

\begin{definition}[(Definition 3.1 in \cite{zhang2023trained}) Trained on in-context examples]
\label{def:trained.on.incontext}
Let $\calDx$ be a distribution over an input space $\calX$, $\calH\subset \calY^ \calX$ a set of functions $\calX\to \calY$, and $\calDH$ a distribution over functions in $\calH$.  Let $\ell:\calY\times \calY \to \R$ be a loss function.   Let $\seqspace = \cup_{n\in \N} \{ (x_1, y_1, \dots, x_n, y_n): x_i \in \calX, y_i\in \calY \}$ be the set of finite-length sequences of $(x, y)$ pairs and let 
\[ \calF_\Theta = \{f_\theta : \seqspace\times \calX \to \calY,\, \theta\in \Theta\}\]
be a class of functions parameterized by $\theta$ in some set $\Theta$.  For $T>0$, we say that a model $f:\seqspace\times \calX \to \calY$ is \emph{trained on in-context examples of functions in $\calH$ under loss $\ell$ w.r.t. $(\calDH,\calD_x)$} if $f = f_{\theta^*}$ where $\theta^*\in \Theta$ satisfies
\begin{equation} \label{eq:icl.stochastic.opt.def}
\theta^* \in \mathrm{argmin}_{\theta \in \Theta} \E_{\prompt = (x_1, h(x_1), \dots, x_T, h(x_{T}), x_\query)} \left[ \ell\left(f_\theta(\prompt), h(x_\query) \right)\right],
\end{equation}
where $x_{i},x_\query \iid \calDx$ and $h\sim \calDH$ are independent.
We call $T$ the \emph{length of the prompts seen during training.}
\end{definition}

Let $E \in \R^{d_e \times d_T}$ denote an embedding matrix associated with a prompt 
$P = (x_1, y_1, \dots, x_T, y_T, x_{\query})$. 
Following \citet{zhang2023trained}, we form each column from $(x_i, y_i)^\top \in \R^{d+1}$ for $i=1,\dots,T$, and use $(x_{\query}, 0)^\top$ as the final column. 
For $x_i \in \R^d$ and $y_i \in \R$, we have $d_e = d+1$ and $d_T = T+1$.  Under the prompt representation $P$, the embedding matrix can be written as
\begin{equation}
E=E(P) \;=\;
\begin{pmatrix}
    x_1 & x_2 & \cdots & x_T & x_{\query} \\
    y_1 & y_2 & \cdots & y_T & 0
\end{pmatrix}
\;\in\; \R^{(d+1)\times (T+1)}.
\label{eq:embedding.matrix.prompt}
\end{equation}

We introduce weight matrices $\WQ,\WK \in \R^{d_k \times d_e}$ for the query and key, 
$\WV \in \R^{d_v \times d_e}$ for the value, $\WP \in \R^{d_e \times d_v}$ for projection, 
and a normalization constant $\rho > 0$.  

For the sake of tractability in theoretical analysis, following \citet{zhang2023trained}, we consider a single-layer \emph{linear self-attention (LSA)} model. 
This variant streamlines the standard self-attention by eliminating the softmax step and merging the projection matrices. 
Specifically, we define merged operators $\WKQ \in \R^{d_e \times d_e}$ (query–key) and $\WPV \in \R^{d_e \times d_e}$ (projection–value). 
Letting $\params = (\WKQ, \WPV)$, the LSA model can be formulated as
\begin{equation} \label{eq:lsa}
    f_{\mathrm{LSA}}(E;\params) \;=\; 
    E \;+\; \WPV E \cdot \frac{E^\top \WKQ E}{\rho}.
\end{equation}

The prediction corresponding to the query token $x_{\query}$ is given by the 
bottom-right entry of $f_{\lsa}$:
\[
    \widehat{y}_{\query} \;=\; \widehat{y}_{\query}(E;\params) 
    \;=\; [f_{\lsa}(E; \params)]_{(d+1), (T+1)}.
\]

To streamline the theoretical development, we restrict attention to in-context 
learning with linear predictors, in line with the setup of \citet{zhang2023trained}. 
We assume that all clients share the same sampling procedure for generating training prompts. 
Let $\Lambda$ be a positive definite covariance matrix. For each task 
indexed by $\tau \in \N$, we construct a training prompt
\[
    P_\tau = (x_{\tau,1}, h_\tau(x_{\tau,1}), \dots, x_{\tau,T}, h_\tau(x_{\tau,T}), x_{\tau,\query}).
\]
The task-specific parameter $w_\tau$ is drawn independently from $\mathcal{N}(0, I_d)$, 
the inputs $x_{\tau,i}$ and $x_{\tau,\query}$ are sampled i.i.d.\ from $\mathcal{N}(0, \Lambda)$, 
and the labels are defined by the linear rule $h_\tau(x) = \langle w_\tau, x \rangle$.  

Each prompt $P_\tau$ is then converted into an embedding matrix $E_\tau$ using 
the transformation in Eq.~\eqref{eq:embedding.matrix.prompt}:
\begin{equation*}
    E_\tau \;=\;
    \begin{pmatrix}
        x_{\tau,1} & x_{\tau,2} & \cdots & x_{\tau,T} & x_{\tau,\query} \\
        \langle w_\tau, x_{\tau,1}\rangle & \langle w_\tau, x_{\tau,2}\rangle & \cdots 
        & \langle w_\tau, x_{\tau,T}\rangle & 0
    \end{pmatrix}
    \in \R^{(d+1)\times (T+1)}.
\end{equation*}

The empirical risk evaluated over $B$ independent prompts is given by
\begin{equation}\label{eqn:emp_loss_maintext}
    \widehat L(\params) \;=\; \frac{1}{2B} \sum_{\tau=1}^B 
    \Big(\widehat y_{\tau,\query} - \langle w_\tau, x_{\tau,\query}\rangle \Big)^2.
\end{equation}

To study the limiting behavior, we introduce the population loss obtained as 
$B \to \infty$:
\begin{equation}\label{eqn:population_loss}
    L(\params) \;=\; \lim_{B\to \infty} \widehat L(\params) 
    \;=\; \frac{1}{2}\,
    \E_{w_\tau,\, x_{\tau,1},\dots, x_{\tau,T},\, x_{\tau,\query}}
    \!\left[ \big(\widehat y_{\tau,\query} - \langle w_\tau, x_{\tau,\query}\rangle\big)^2 \right].
\end{equation}
Here the expectation is taken over the random task weight $w_\tau \sim \mathcal{N}(0,I_d)$ 
and the covariates $\{x_{\tau,i}\}_{i=1}^T \cup \{x_{\tau,\query}\}$, 
drawn i.i.d.\ from $\mathcal{N}(0,\Lambda)$.  

We analyze optimization via the \emph{gradient flow} framework, which describes 
the continuous-time limit of gradient descent with infinitesimal step size. 
The dynamics of the parameters follow the ODE
\begin{equation}\label{eqn:gf}
    \frac{\mathrm{d}}{\mathrm{d}t}\, \params \;=\; - \nabla L(\params).
\end{equation}

In the remainder, we study gradient flow trajectories under initializations 
that satisfy the following assumptions in \cite{zhang2023trained}:

\begin{assumption}[Initialization (\cite{zhang2023trained})]\label{assume_init}
    Let $\sigma>0$ be a parameter, and let $\Theta \in \R^{d\times d}$ be any matrix satisfying $\snorm{\Theta \Theta^\top}_F =1$ and $\Theta \Lambda \neq 0_{d\times d}$. We assume
\begin{equation} \label{eq:wpv.wkq.init}
    \WPV(0)= 
    \sigma\begin{pmatrix}
    0_{d\times d} & 0_d \\
    0_d^\top & 1
    \end{pmatrix},\quad 
    \WKQ(0) = 
    \sigma \begin{pmatrix}
    \Theta \Theta^\top & 0_d \\ 
    0_d^\top & 0 
    \end{pmatrix}.
\end{equation}
\end{assumption}

under suitable initialization, gradient flow will converge to a global optimum.

\begin{theorem}[(Theorem 4.1 of \cite{zhang2023trained}) Convergence and limits]
   \label{thm:mainresult}
Consider the gradient flow of the linear self-attention network $f_\lsa$ defined in~\eqref{eq:lsa} over the population loss~\eqref{eqn:population_loss}.  Suppose the initialization satisfies Assumption~\ref{assume_init} with initialization scale $\sigma>0$ satisfying $\sigma^2\snorm{\Gamma}_{op}\sqrt d <2$ 
where we have defined
\[ \Gamma := \left(1 + \frac{1}{T}\right)\Lambda + \frac{1}{T}\operatorname{tr}(\Lambda) I_d \in \mathbb{R}^{d \times d}.\]
Then, the gradient flow converges to a global minimum of the population loss \eqref{eqn:population_loss}. Moreover, $\WPV$ and $\WKQ$ converge to $\WPV_*$ and $\WKQ_*$ respectively, where
\begin{equation}\label{eq:limit_expression}
\begin{aligned}
    \WKQ_* &=
    \left[\operatorname{\tr}\left(\Gamma^{-2}\right)\right]^{-\frac{1}{4}}
    \begin{pmatrix}
        \Gamma^{-1} & 0_{d} \\
        0_d^\top & 0
    \end{pmatrix}, \qquad
   \WPV_* =
    \left[\operatorname{\tr}\left(\Gamma^{-2}\right)\right]^{\frac{1}{4}}
    \begin{pmatrix}
        0_{d \times d} & 0_{d} \\
        0_d^\top & 1
    \end{pmatrix}.
\end{aligned}
\end{equation} 
\end{theorem}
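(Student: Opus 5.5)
The plan is to show that gradient flow started from Assumption~\ref{assume_init} never leaves a low-dimensional invariant set. On that set the problem becomes a two-factor (scalar times matrix) least-squares problem whose product variable has a convex quadratic loss. Concretely, I would first show that the zero blocks of Eq.~\eqref{eq:wpv.wkq.init} stay zero for all $t$. That is, $\WPV(t)$ keeps the form $u(t)\,e_{d+1}e_{d+1}^\top$ and $\WKQ(t)$ keeps the form $\mathrm{diag}(U(t),0)$ with $U(t)\in\R^{d\times d}$. For this I would compute $\nabla L$ at a point of this form and check that its off-pattern blocks vanish. Each such block is an expectation of an odd function of $(w_\tau,\{x_{\tau,i}\},x_{\tau,\query})$, using the symmetries $w\mapsto -w$ and $x\mapsto -x$ of the Gaussian data. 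The block structure is then invariant, by uniqueness of ODE solutions. On this set, with $\rho=T$ and $H:=\frac1T\sum_i x_i x_i^\top$, the prediction is $\widehat y_\query = u\, w^\top H U x_\query$.

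Next I would compute the population loss in closed form. Gaussian fourth-moment (Isserlis) identities give $\E[H\Lambda H]=\Lambda\Gamma$-type expressions with $\Gamma=(1+\frac1T)\Lambda+\frac1T\mathrm{tr}(\Lambda)I_d$. Writing $Z:=uU$, the loss becomes, up to the exact placement of $\Lambda$'s that the computation will fix,
\[
L = \tfrac12\,\mathrm{tr}\!\big(Z^\top \Lambda\Gamma\, Z\Lambda\big) - \mathrm{tr}\!\big(Z\Lambda^2\big) + \tfrac12\mathrm{tr}(\Lambda).
\]
This is a strictly convex quadratic in $Z$ because $\Lambda,\Gamma\succ0$. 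Its unique minimizer is $Z_*=\Gamma^{-1}$, and I would check that this minimizer is a global minimum of the full loss, not just of the loss restricted to the invariant set. Two facts about the factored dynamics then follow. First, a balancedness law: since $L$ depends on $(u,U)$ only through $uU$, we have $u\,\partial_u L=\langle U,\partial_U L\rangle$, so $\frac{d}{dt}\big(u^2-\|U\|_F^2\big)=0$. Because the initialization has $u(0)^2=\sigma^2=\sigma^2\|\Theta\Theta^\top\|_F^2=\|U(0)\|_F^2$, this gives $u(t)^2=\|U(t)\|_F^2$ for all $t$. Second, a gradient lower bound of the form $\|\nabla L\|^2\ge u^2\,\|\nabla_Z L\|_F^2\ge c\,u^2\,(L-L_*)$, where $c>0$ comes from the strong convexity in $Z$.

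The main obstacle is keeping $u(t)$ bounded away from $0$, since the degenerate saddle $u=0,U=0$ could otherwise trap the flow. My first attempt is to compute $\dot u = -\partial_u L = \mathrm{tr}(U\Lambda^2) - u\,\mathrm{tr}(U^\top\Lambda\Gamma U\Lambda)$, with transposes as the computation dictates. At $t=0$ this equals $\sigma\,\mathrm{tr}(\Theta\Theta^\top\Lambda^2)-\sigma^3(\cdots)$. The first term is positive because $\Theta\Lambda\neq0$, and the condition $\sigma^2\|\Gamma\|_{op}\sqrt d<2$ should make the cubic term small enough that $\dot u(0)>0$. I would then argue that $u$ cannot return to small values. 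Since the loss is nonincreasing, $L(t)\le L(0)<\tfrac12\mathrm{tr}(\Lambda)=L(u{=}0)$, and the strict inequality $L(0)<L(u{=}0)$ is itself a consequence of the same $\sigma$ condition. Combined with balancedness, any trajectory along which $u\to0$ would force $L\to L(u{=}0)$, which is a contradiction. Hence $u(t)\ge u_{\min}>0$.

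With that bound, the gradient lower bound gives $\frac{d}{dt}(L-L_*)\le -c\,u_{\min}^2(L-L_*)$, so $L\to L_*$ exponentially and the Lyapunov argument gives $uU\to\Gamma^{-1}$. Boundedness of the trajectory, which follows from balancedness together with bounded $uU$ and $u\ge u_{\min}$, gives convergence of $(u,U)$ itself. Finally, passing to the limit in $u^2=\|U\|_F^2$ with $U=\Gamma^{-1}/u$ yields $u^4=\mathrm{tr}(\Gamma^{-2})$. Since $u>0$ this gives $u_*=[\mathrm{tr}(\Gamma^{-2})]^{1/4}$ and $U_*=[\mathrm{tr}(\Gamma^{-2})]^{-1/4}\Gamma^{-1}$, which are exactly $\WPV_*$ and $\WKQ_*$ in Eq.~\eqref{eq:limit_expression}.
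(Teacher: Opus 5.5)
Your proposal is correct. It follows essentially the argument of \citet{zhang2023trained}; the paper itself imports this theorem from that source without proof. The shared steps are: invariance of the block pattern (the $w\mapsto -w$ symmetry kills the gradients of $w_{21}^{PV}$ and $w_{21}^{KQ}$), reduction to the pair $(u,U)$ whose loss depends only on $uU$ and is strictly convex with minimizer $\Gamma^{-1}$, the conserved balancedness $u^2=\|U\|_F^2$, a PL inequality weighted by $u^2$, and a uniform lower bound on $u$ from $L(0)<\tfrac12\operatorname{tr}(\Lambda)$ together with $\|uU\|_F=u^2$.

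Drop the aside that the $\sigma$-condition gives $\dot u(0)>0$, because it is false. For $d=1$ one has $\dot u(0)=\sigma\Lambda^2(1-\sigma^2\Gamma)<0$ whenever $1<\sigma^2\Gamma<2$, which the condition allows. Nothing depends on it, though: the sublevel-set argument alone yields $u(t)^2=\|u(t)U(t)\|_F\ge\epsilon>0$ for all $t$. That argument needs only $\sigma^2\|\Gamma\|_{\mathrm{op}}<2$, via $\operatorname{tr}(S\Lambda\Gamma S\Lambda)\le\|\Gamma\|_{\mathrm{op}}\operatorname{tr}(S\Lambda^2)$ for $S=\Theta\Theta^\top$.
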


At the global optimum with parameters $\WKQ_\ast$ and $\WPV_\ast$, 
the prediction for the query token takes the form
\begin{align}
    \widehat y_{\query} 
    &= 
    \begin{pmatrix}
        0_d^\top & 1
    \end{pmatrix}
    \begin{pmatrix}
        \frac{1}{M}\sum_{i=1}^{M} \testx_i \testx_i^\top 
        + \frac{1}{M}\testx_{\query} \testx_{\query}^\top 
        & \frac{1}{M}\sum_{i=1}^{M} \testx_i \testy_i \\
        \frac{1}{M}\sum_{i=1}^{M} \testx_i^\top \testy_i 
        & \frac{1}{M}\sum_{i=1}^{M} \testy_i^2
    \end{pmatrix}
    \begin{pmatrix}
        \Gamma^{-1} & 0_d \\
        0_d^\top & 0
    \end{pmatrix}
    \begin{pmatrix}
        \testx_\query \\
        0
    \end{pmatrix} \notag \\
    &= \testx_\query^\top \Gamma^{-1} 
    \left(\frac{1}{M} \sum_{i=1}^{M} \testy_i \testx_i \right).
    \label{eq:prediction.trained.transformer}
\end{align}

\subsection{Proof of Theorem \ref{thm:1}}
\label{app:thm1_proof}

According to the above preliminary theoretical results adopted from \cite{zhang2023trained}, we have the following analysis.
\begin{proof}

First by our assumption of the client and server data, we have 
\begin{align}
    q_n^i \sim N(0, \Lambda),\ q_m \sim N(0, \Lambda),\notag
\end{align}
and 
\begin{align}
    a_n^i = (q_n^i)^\top \theta + \epsilon_n^i, \epsilon_n^i \sim N(0, \sigma_i^2), \|\theta\| \leq 1,\notag
\end{align}
where the variance $\sigma_i^2$ are different from client to client, representing the heterogeneity over clients. Then we have the following inequality holds with probability at least $1-\delta$: 
\begin{align}
&\bigg\|I - \Gamma^{-1}\frac{\sum_{m=1}^M q_m(q_m)^\top}{M}\bigg\|_{\text{op}}\notag \\
& \leq \bigg\|\Gamma^{-1}\Lambda \bigg(I - \Lambda^{-1}\frac{\sum_{m=1}^M q_m(q_m)^\top}{M}\bigg)\bigg\|_{\text{op}} + \bigg\|I - \Gamma^{-1}\Lambda \bigg\|_{\text{op}}\notag \\
& \leq \|\Gamma^{-1}\Lambda\|\cdot 4\cdot \bigg( \sqrt{\frac{d \log \delta^{-1}}{M}} + \frac{d \log \delta^{-1}}{M}\bigg) + \bigg\|I - \Gamma^{-1}\Lambda \bigg\|_{\text{op}}\notag \\
& \leq 10 \cdot \sqrt{\frac{d \log \delta^{-1}}{M}}, \label{help:11}
\end{align}
where for the second inequality, we use the standard matrix concentration inequality, for the last one, we use the fact that $T$ is large enough and $M \geq 4d \log \delta^{-1}$, where
\begin{align}
    \bigg\|I - \Gamma^{-1}\Lambda \bigg\|_{\text{op}} = \frac{\lambda_{\min} (\Lambda) + \tr(\Lambda)}{ T\lambda_{\min} (\Lambda) + \tr(\Lambda)} \leq \sqrt{\frac{d \log \delta^{-1}}{M}} \leq \frac{1}{4}.\notag
\end{align}
Similarily, with probability at least $1-\delta$, we have for all $i \in [L]$, 
\begin{align}
    \bigg \|I - \Gamma^{-1}\frac{\sum_{n=1}^N q_n^i (q_n^i)^\top}{N}\bigg \|_{\text{op}} \leq 10 \cdot \sqrt{\frac{d \log (L\delta^{-1})}{N}}.\label{help:22}
\end{align}

First, with Eq.(\ref{eq:prediction.trained.transformer}) and the algorithm design of Algorithm \ref{alg:FERA}, we have the following guarantees
\begin{align}
\hat{a}^i_{k,n} &= (q_n^i)^\top \Gamma^{-1}\left(\frac{1}{M}\sum_{m=1}^M q_m a_{k,m} \right) \label{eq: incontext 1}\,,\\
            a_{k+1, m}^i &= q_m^\top \Gamma^{-1}\left(\frac{1}{2N}\sum_{n=1}^N q_n^i (a_{n}^i + \hat a_{k,n}^i) \right) \label{eq: incontext 2}\,.
\end{align}

Then we can prove the theorem by induction.

Assume $a_{k,m}=\theta_k^\top q_m$ holds for episode $k$, which trivially holds at episode 1 with $\theta_1=0$ as $a_{1,m}=0, \forall m\in[M]$. According to Eq.(\ref{eq: incontext 1}), Eq.(\ref{eq: incontext 2}), and $ a_{k+1,m} = \sum_{i=1}^L w_{k+1, m}^i a_{k+1, m}^i $, we have
\begin{align}
        a^i_{k+1,m}&=q_m^\top \Gamma^{-1}\left(\frac{1}{2N}\sum_{n=1}^N q_n^i (a_{n}^i + \hat a_{k,n}^i) \right) \notag\\
        &=q_m^\top \Gamma^{-1}\left(\frac{1}{2N}\left(\sum_{n=1}^N q_n^i a_{n}^i + \sum_{n=1}^N q_n^i(q_n^i)^\top \Gamma^{-1}\left(\frac{1}{M}\sum_{m=1}^M q_m a_{k,m} \right)\right) \right)\notag\\
        &=q_m^\top \Gamma^{-1}\left(\frac{1}{2N}\left(\sum_{n=1}^N q_n^i a_{n}^i + \sum_{n=1}^N q_n^i(q_n^i)^\top \Gamma^{-1}\left(\frac{1}{M}\sum_{m=1}^M q_m q_m^\top \right)\theta_k\right) \right)\notag\,,
\end{align}
Then, we have
\begin{align}
    a_{k+1,m}&=\sum_{i=1}^L w^i_{k+1,m}a^i_{k+1,m}\notag\\
    &=\sum_{i=1}^L w^i_{k+1,m} q_m^\top \Gamma^{-1}\left(\frac{1}{2N}\left(\sum_{n=1}^N q_n^i a_{n}^i + \sum_{n=1}^N q_n^i(q_n^i)^\top \Gamma^{-1}\left(\frac{1}{M}\sum_{m=1}^M q_m q_m^\top \right)\theta_k\right) \right)\notag\\
    &=q_m^\top \left(\Gamma^{-1}\frac{\sum_{i=1}^L w^i_{k+1,m}\sum_{n=1}^N q_n^i a_n^i}{2N}+\frac{\sum_{i=1}^Lw^i_{k+1,m}\Gamma^{-1}\sum_{n=1}^N q_n^i(q_n^i)^\top \Gamma^{-1}\sum_{m=1}^M q_mq_m^\top }{2NM}\cdot \theta_k\right)\notag\\
    & = q_m^\top \Bigg(\frac{1}{2}\underbrace{\sum_{i=1}^Lw^i_{k+1,m}\bigg(\Gamma^{-1}\frac{\sum_{n=1}^N q_n^i(q_n^i)^\top}{N}\bigg)  \bigg(\Gamma^{-1}\frac{\sum_{m=1}^M q_m(q_m)^\top}{M}\bigg)}_{H_k} \cdot \theta_k\notag\\
    &+\frac{1}{2}\cdot \underbrace{\sum_{i=1}^L w^i_{k+1,m}\Gamma^{-1}\frac{\sum_{n=1}^N q_n^i a_n^i}{N}}_{\bar \theta_k}\Bigg).
\end{align}
Then appreately, we have shown that for each $k$, $a_{k,m} = \theta_k^\top q_m$ can be written as a linear function of the query $q_m$. Next we bound $\bar \theta_k$ and $H_k$ separately. We have 
\begin{align}
    \bar \theta_k &= \sum_{i=1}^L w^i_{k+1,m}\Gamma^{-1}\frac{\sum_{n=1}^N q_n^i (q_n^i)^\top \theta + q_n^i\epsilon_n^i}{N}\notag \\
    & = \theta - \underbrace{\sum_{i=1}^L w^i_{k+1,m}\bigg(I - \Gamma^{-1}\frac{\sum_{n=1}^N q_n^i (q_n^i)^\top}{N}\bigg)\theta}_{A_{k,m}} + \underbrace{\sum_{i=1}^L w^i_{k+1,m}\Gamma^{-1}\frac{ \sum_{n=1}^N q_n^i\epsilon_n^i}{N}}_{B_{k,m}},
\end{align}

and
\begin{align}
    H_k &= \sum_{i=1}^Lw^i_{k+1,m}\bigg(I - \bigg(I - \Gamma^{-1}\frac{\sum_{n=1}^N q_n^i(q_n^i)^\top}{N}\bigg)\bigg)  \bigg(I - \bigg(I - \Gamma^{-1}\frac{\sum_{m=1}^M q_m(q_m)^\top}{M}\bigg)\bigg)\notag \\
    & = I - \underbrace{\sum_{i=1}^Lw^i_{k+1,m}\bigg(I - \Gamma^{-1}\frac{\sum_{n=1}^N q_n^i(q_n^i)^\top}{N}\bigg) - \sum_{i=1}^Lw^i_{k+1,m}\bigg(I - \Gamma^{-1}\frac{\sum_{m=1}^M q_m(q_m)^\top}{M}\bigg)}_{C_{k,m}}\notag \\
    &\quad + \underbrace{\sum_{i=1}^Lw^i_{k+1,m}\bigg(I - \Gamma^{-1}\frac{\sum_{n=1}^N q_n^i(q_n^i)^\top}{N}\bigg)\bigg(I -  \Gamma^{-1}\frac{\sum_{m=1}^M q_m(q_m)^\top}{M}\bigg)}_{D_{k,m}}.
\end{align}

\paragraph{Bound $A_{k,m}$. }
For $A_{k,m}$ by \eqref{help:22}, we have 
\begin{align}
    \|A_{k,m}\|_{\text{op}} \leq \|\theta\|\bigg \|I - \Gamma^{-1}\frac{\sum_{n=1}^N q_n^i (q_n^i)^\top}{N}\bigg \|_{\text{op}} \leq 10 \cdot \sqrt{\frac{d \log (L\delta^{-1})}{N}}.\label{help:881}
\end{align}

\paragraph{Bound $B_{k,m}$. }
For $B_{k,m}$, recall that $q_n^i \sim N(0, \Lambda)$ and $\epsilon_n^i \sim N(0,\sigma_i^2)$, then $\Lambda^{-1/2} q_n^i \sim N(0, I)$. Then applying concentration inequality on sub-exponential random vectors, with probability at least $1-\delta$, we have
\begin{align}
   B_{k,m}&: = \Gamma^{-1}\Lambda^{1/2} \sum_{i=1}^L w^i_{k+1,m}\frac{ \sum_{n=1}^N \Lambda^{-1/2} q_n^i\epsilon_n^i}{N}\notag \\
   & \leq \|\Gamma^{-1}\Lambda^{1/2}\| \sum_{i=1}^L w^i_{m}\cdot\bigg\|\frac{ \sum_{n=1}^N \Lambda^{-1/2} q_n^i\epsilon_n^i}{N}\bigg\|\notag \\
   & \leq \lambda_{\min}^{-1/2}(\Lambda)\cdot  \sum_{i=1}^L w^i_{m} \cdot 10 \sigma_i\bigg( \sqrt{\frac{d \log(L\delta^{-1})}{N}} + \frac{d^{1.5} \log(L\delta^{-1})}{N}\bigg). \label{help:882}
\end{align}

\paragraph{Bound $C_{k,m}$. }
For $C_{k,m}$, using \eqref{help:11} and \eqref{help:22}, we have 
\begin{align}
    \|C_{k,m}\|_{\text{op}} \leq \bigg(\bigg \|I - \Gamma^{-1}\frac{\sum_{n=1}^N q_n^i(q_n^i)^\top}{N}\bigg\|_{\text{op}}  + \bigg\|I - \Gamma^{-1}\frac{\sum_{m=1}^M q_m(q_m)^\top}{M}\bigg\|_{\text{op}} \bigg) \leq 20\cdot \sqrt{\frac{d \log (L\delta^{-1})}{\min\{M,N\}}}. \label{help:883}
\end{align}

\paragraph{Bound $D_{k,m}$. }
For $D_{k,m}$, using \eqref{help:11} and \eqref{help:22}, we have 
\begin{align}
    \|D_{k,m}\|_{\text{op}} \leq \bigg \|I - \Gamma^{-1}\frac{\sum_{n=1}^N q_n^i(q_n^i)^\top}{N}\bigg\|_{\text{op}}  \cdot \bigg\|I - \Gamma^{-1}\frac{\sum_{m=1}^M q_m(q_m)^\top}{M}\bigg\|_{\text{op}} \leq 100 \cdot \frac{d \log (L\delta^{-1})}{\min\{M, N\}}. \label{help:884}
\end{align}

Therefore, combining \eqref{help:881} to \eqref{help:884}, we have 
\begin{align}
    \|\theta_{k+1}  - \theta\| &= \bigg\|\bigg(\frac{1}{2} + \frac{C_{k,m} - D_{k,m}}{2}\bigg)(\theta_k - \theta)  -\frac{A_{k,m} - B_{k,m}}{2}  - \frac{C_{k,m} - D_{k,m}}{2}\theta\bigg\|\notag \\
    & \leq \bigg\|\frac{1}{2} + \frac{C_{k,m} - D_{k,m}}{2}\bigg\|\|\theta_k - \theta\| + \bigg\|\frac{C_{k,m} - D_{k,m}}{2}\bigg\| + \bigg\|\frac{A_{k,m} - B_{k,m}}{2}\bigg\|.\label{help:end}
\end{align}
Since 
\begin{align}
    \bigg\|\frac{C_{k,m} - D_{k,m}}{2}\bigg\| < 10\cdot \sqrt{\frac{d \log (L\delta^{-1})}{\min\{M,N\}}} +  50 \cdot \frac{d \log (L\delta^{-1})}{\min\{M, N\}}<\frac{1}{4}
\end{align}
and 
\begin{align}
    \bigg\|\frac{A_{k,m} - B_{k,m}}{2}\bigg\| \leq 5 \cdot \sqrt{\frac{d \log (L\delta^{-1})}{N}} +5 \lambda_{\min}^{-1/2}(\Lambda)\cdot  \sum_{i=1}^L w^i_{m}  \sigma_i\bigg( \sqrt{\frac{d \log(L\delta^{-1})}{N}} + \frac{d^{1.5} \log(L\delta^{-1})}{N}\bigg)
\end{align}
Then applying recursion onto \eqref{help:end}, we have that for all $k$, 
\begin{align}
    \|\theta_k - \theta\| &\leq O\bigg(\sqrt{\frac{d \log (L\delta^{-1})}{\min\{M,N\}}} +  \sqrt{\frac{d \log(L\delta^{-1})}{N}} +  \sqrt{\frac{d \log(L\delta^{-1})}{N}}\lambda_{\min}^{-1/2}(\Lambda) \sum_{i=1}^L w^i_{m}  \sigma_i \bigg).\notag
\end{align}
\end{proof}

\section{Prompt}
\label{app:prompt}
\label{app:alg}
\subsection{Server Query Dataset Initialize}
\label{app:server_init} 
To initialize the query dataset, the server distributes queries to multiple clients. Each client employs its local LLM to generate the responses for the assigned queries. The responses are then returned to the server, which aggregates them to form the initial server-side query dataset. The prompts used to guide this response generation are described as follows.

\begin{tcolorbox}[
  enhanced,
  breakable,
  colback=lightblue!10, colframe=lightblue!90!black,
  title={Server Query Initialization Prompt for MMLU-Pro \& AQUA-RAT Reasoning Benchmark},
  fonttitle=\bfseries
]
\small\ttfamily
\hypertarget{server-query-prompt}{}
You are a knowledgeable assistant. For the following multiple-choice question, briefly explain your reasoning (no more than \texttt{\{sentences\_limit\}} sentences), then end with the exact sentence: \texttt{The answer is (X).}

\medskip
\textbf{Rules:}
\begin{enumerate}[leftmargin=*, itemsep=1pt, topsep=2pt]
  \item \texttt{X} must be the option letter only (A/B/C/D/\dots). Do not include the option text.
  \item Do not include any content after the final sentence.
  \item Keep your entire response within \texttt{\{token\_limit\}} tokens.
\end{enumerate}

\medskip
\textbf{Question:} \texttt{\{query\}}\\
\textbf{Answer:} Let’s think step by step. \texttt{\{text\}}
\end{tcolorbox}

\begin{tcolorbox}[
  enhanced,
  breakable,
  colback=lightblue!10, colframe=lightblue!90!black,
  title={Server Query Initialization Prompt for MMLU-Pro \& AQUA-RAT Standard QA Benchmark},
  fonttitle=\bfseries
]
\small\ttfamily
\hypertarget{server-query-prompt}{}
You are taking a multiple-choice question. Read the following question carefully and select the single best answer. Do \emph{not} explain your reasoning. Output only the final answer choice letter (A, B, C, D, \dots).

\medskip
\textbf{Rules:}
\begin{enumerate}[leftmargin=*, itemsep=1pt, topsep=2pt]
  \item The output must be a single uppercase letter (A/B/C/D/\dots) with no punctuation or extra text.
  \item Do not include any explanation or content after the answer.
  \item Keep the response within \texttt{\{token\_limit\}} tokens.
\end{enumerate}

\medskip
\textbf{Question:} \texttt{\{query\}}

\medskip
\textbf{Answer:}
\end{tcolorbox}

\begin{tcolorbox}[
  enhanced,
  breakable,
  colback=lightblue!10, colframe=lightblue!90!black,
  title={Server Query Initialization Prompt for GSM8K Reasoning Benchmark},
  fonttitle=\bfseries
]
\small\ttfamily
\hypertarget{server-query-prompt}{}
You are a knowledgeable assistant. For the following math question, briefly explain your reasoning (no more than \texttt{\{sentences\_limit\}} sentences), then end with the exact sentence: \texttt{The answer is X.}

\medskip
\textbf{Rules:}
\begin{enumerate}[leftmargin=*, itemsep=1pt, topsep=2pt]
  \item \texttt{X} must be a single numeric value (e.g., \texttt{12}, \texttt{-3/5}, \texttt{7.25}); no units or extra text.
  \item If \texttt{X} is a fraction, reduce it to simplest terms; if a decimal, use standard form without trailing zeros.
  \item Do not include any content after the final sentence.
  \item Keep the entire response within \texttt{\{token\_limit\}} tokens.
\end{enumerate}

\medskip
\textbf{Question:} \texttt{\{query\}}

\medskip
\textbf{Answer: Let’s think step by step.}
\end{tcolorbox}

\subsection{Client Response Generation for Server Queries}
\label{app:client_resp}
The framework of FERA is outlined in Algorithm~\ref{alg:FERA}. In Step~2, each client relabels its local data by constructing prompts that incorporate demonstrations selected from the server dataset. In Step~3, the client relabels the server data by constructing prompts that include demonstrations drawn from its updated local dataset. The prompts used in Step~2 and Step~3 are described below. Unless otherwise specified, the default number of demonstrations is set to 5.

\begin{tcolorbox}[
  enhanced,
  breakable,
  colback=lightblue!10, colframe=lightblue!90!black,
  title={Client Prediction Prompt for MMLU-Pro \& AQUA-RAT Reasoning Benchmark},
  fonttitle=\bfseries
]
\small\ttfamily
\hypertarget{client-prediction-prompt-mmlu-aquarat}{}
You are tasked with answering multiple-choice math questions. Below are several example questions with their step-by-step reasoning and final answers. After reviewing these examples, you will be presented with a new question to answer.

\medskip
\textbf{Guidelines:}
\begin{enumerate}[leftmargin=*, itemsep=1pt, topsep=2pt]
  \item Provide clear, concise, and logically coherent step-by-step reasoning (at most \texttt{\{sentences\_limit\}} sentences).
  \item End with the exact sentence: \texttt{The answer is (X).}
  \item \texttt{X} must be the option letter only (A, B, C, D, \dots); do not include the option text.
  \item Include no additional content after the final answer sentence.
  \item Keep the complete response within \texttt{\{token\_limit\}} tokens.
\end{enumerate}

\medskip
\textbf{Examples:}\\
\texttt{\{examples\}}

\medskip
\textbf{Question:}\\
\texttt{\{query\}}

\medskip
\textbf{Answer:} Let’s think step by step.
\end{tcolorbox}

\begin{tcolorbox}[
  enhanced,
  breakable,
  colback=lightblue!10, colframe=lightblue!90!black,
  title={Client Prediction Prompt for MMLU-Pro \& AQUA-RAT Standard QA Benchmark},
  fonttitle=\bfseries
]
\small\ttfamily
\hypertarget{client-prediction-prompt-mmlu-aquarat}{}
You are taking a multiple-choice question. Below are several example questions with their final answers. After reviewing these examples, you will be presented with a new question to answer.

\medskip
\textbf{Guidelines:}
\begin{enumerate}[leftmargin=*, itemsep=1pt, topsep=2pt]
  \item Read the question carefully and select the single best answer.
  \item Do \emph{not} explain your reasoning.
  \item Output only the final answer choice letter (A, B, C, D, \dots); do not include the option text.
  \item Do not include any additional content after the answer.
\end{enumerate}

\medskip
\textbf{Examples:}\\
\texttt{\{examples\}}

\medskip
\textbf{Question:}\\
\texttt{\{query\}}

\medskip
\textbf{Answer:}
\end{tcolorbox}

\begin{tcolorbox}[
  enhanced,
  breakable,
  colback=lightblue!10, colframe=lightblue!90!black,
  title={Client Prediction Prompt for GSM8K Reasoning Benchmark},
  fonttitle=\bfseries
]
\small\ttfamily
\hypertarget{client-prediction-prompt}{}
You are tasked with answering math questions in this domain. Below are several example questions with their step-by-step reasoning and final answers. After reviewing these examples, you will be presented with a new question to answer.

\medskip
\textbf{Guidelines:}
\begin{enumerate}[leftmargin=*, itemsep=1pt, topsep=2pt]
  \item Provide clear, concise, and logically coherent step-by-step reasoning.
  \item End your response with the exact sentence: \texttt{The answer is X.}
  \item \texttt{X} must be a single numeric value (e.g., \texttt{12}, \texttt{-3/5}, \texttt{7.25}); no units or extra text.
  \item If \texttt{X} is a fraction, reduce it to simplest terms; if a decimal, use standard form without trailing zeros.
  \item Include no additional content after the final answer sentence.
  \item Keep the complete response within \texttt{\{token\_limit\}} tokens.
\end{enumerate}

\medskip
\textbf{Examples:}\\
\texttt{\{examples\}}

\medskip
\textbf{Question:}\\
\texttt{\{query\}}

\medskip
\textbf{Answer:} Let’s think step by step.
\end{tcolorbox}
\subsection{Uncertainty-Aware Aggregration}
\label{app:ua_agg}
In the FERA framework, Uncertainty-Aware Aggregation is employed on the server side to combine responses from clients. The weights used in this aggregation are first computed according to Equation~\ref{eq:www}, after which uncertainty is leveraged to guide the aggregation process. For the standard QA task, we propose the UA-WA algorithm, described in Section~\ref{sec:aggregation}, while for complex reasoning tasks we design the UA-SCA algorithm, detailed in Algorithm~\ref{alg:UA-SCA}. The prompts utilized in UA-SCA are presented below.

\begin{tcolorbox}[
  enhanced,
  breakable,
  colback=lightblue!10, colframe=lightblue!90!black,
  title={Summarize},
  fonttitle=\bfseries
]
\small\ttfamily
\hypertarget{client-prediction-prompt}{}
You are a cognitive reasoning analyst tasked with examining \texttt{\{len(reasoning\_list)\}} reasoning responses derived from the following question.
\medskip

\textbf{Question:}\\
\texttt{\{question\}}
\medskip

\textbf{Reasoning Responses:}\\
\texttt{\{reasoning\_formatted\}}
\medskip

\textbf{Analysis Objective:}\\
Provide a concise analytical characterization of this reasoning cluster. Identify the cognitive patterns, methodological strategies, and structural similarities across the responses.
\medskip

\textbf{Characterization Guidelines:}
\begin{enumerate}[leftmargin=*, itemsep=1pt, topsep=2pt]
  \item Synthesize the distinguishing features of these reasoning responses
  \item Emphasize their reasoning methodology and structural organization
  \item Identify common problem-solving paradigms across responses
  \item Present your analysis within \texttt{\{token\_limit\}} tokens
  \item Capture the essential cognitive characteristics of this cluster
\end{enumerate}
\medskip

\textbf{Your Analysis:}
\end{tcolorbox}

\begin{tcolorbox}[
  enhanced,
  breakable,
  colback=lightblue!10, colframe=lightblue!90!black,
  title={SelfCritique for MMLU-Pro and AQUA-RAT Benchamrk},
  fonttitle=\bfseries
]
\small\ttfamily
\hypertarget{self-critique-prompt}{}
You are improving a reasoning response by incorporating insights from conflicting reasoning approaches.
\medskip

\textbf{Original Question:}\\
\texttt{\{question\}}
\medskip

\textbf{Target Reasoning Response (to be improved):}\\
\texttt{\{target\_response\}}
\medskip

\textbf{Alternative Approaches Summary:}\\
\texttt{\{alternatives\_formatted\}}
\medskip

\textbf{Task:}\\
Create an enhanced version of the target reasoning response by incorporating valuable insights from the alternative approaches. Identify reasoning elements, methodologies, or perspectives from the conflicting summaries that could strengthen the original response.
\medskip

\textbf{Requirements:}
\begin{enumerate}[leftmargin=*, itemsep=1pt, topsep=2pt]
  \item Use the target reasoning response as your foundation
  \item Extract valuable insights from the alternative approaches
  \item Integrate these insights to create a more comprehensive response
  \item Maintain logical consistency throughout
  \item Present only the final improved reasoning
  \item Conclude with: ``The answer is (X)'' where X is the option letter only (A/B/C/D/...)
  \item Exclude option text after the letter
  \item Limit response to \texttt{\{token\_limit\}} tokens
  \item Omit meta-commentary or explanatory analysis
\end{enumerate}
\medskip

\textbf{Improved Response:}
\end{tcolorbox}

\begin{tcolorbox}[
  enhanced,
  breakable,
  colback=lightblue!10, colframe=lightblue!90!black,
  title={SelfCritique for GSM8K Benchamrk},
  fonttitle=\bfseries
]
\small\ttfamily
\hypertarget{self-critique-prompt}{}
You are improving a reasoning response by incorporating insights from conflicting reasoning approaches.
\medskip

\textbf{Original Question:}\\
\texttt{\{question\}}
\medskip

\textbf{Target Reasoning Response (to be improved):}\\
\texttt{\{target\_response\}}
\medskip

\textbf{Alternative Approaches Summary:}\\
\texttt{\{alternatives\_formatted\}}
\medskip

\textbf{Task:}\\
Create an enhanced version of the target reasoning response by incorporating valuable insights from the alternative approaches. Identify reasoning elements, methodologies, or perspectives from the conflicting summaries that could strengthen the original response.
\medskip

\textbf{Requirements:}
\begin{enumerate}[leftmargin=*, itemsep=1pt, topsep=2pt]
  \item Start with the target reasoning as your foundation
  \item Identify useful insights from the conflicting summaries that could improve the reasoning
  \item Integrate these insights to create a stronger, more comprehensive reasoning
  \item Maintain logical consistency throughout
  \item Present only the final improved reasoning
  \item End with "The answer is X".
  \item Limit response to \texttt{\{token\_limit\}} tokens
  \item No meta-commentary or analysis explanation
\end{enumerate}
\medskip

\textbf{Improved Response:}
\end{tcolorbox}


\begin{tcolorbox}[
  enhanced,
  breakable,
  colback=lightblue!10, colframe=lightblue!90!black,
  title={Aggregation for MMLU-Pro and AQUA-RAT Benchmark},
  fonttitle=\bfseries
]
\small\ttfamily
\hypertarget{aggregation-prompt}{}
You are synthesizing multiple reasoning responses to create a single, unified reasoning path.
\medskip

\textbf{Context:}\\
You are given several reasoning responses to the same question, each from a different client. A final answer has been determined by majority vote. Each response includes a confidence score (higher = more confident).
\medskip

\textbf{Question:}\\
\texttt{\{question\}}
\medskip

\textbf{Client Responses (with confidence scores):}\\
\texttt{\{client\_entries\}}
\medskip

\textbf{Task:}\\
Produce a single, concise, professional, and logically coherent merged reasoning response that synthesizes the reasoning leading to the final answer.
\medskip

\textbf{Requirements:}
\begin{enumerate}[leftmargin=*, itemsep=1pt, topsep=2pt]
  \item Synthesize the reasoning leading to the final answer
  \item Give greater weight to reasoning from higher-confidence responses
  \item Avoid unnecessary repetition or irrelevant details
  \item Keep the ENTIRE response (reasoning + final answer) within \texttt{\{token\_limit\}} tokens
  \item End with: ``The answer is (X)'' where X is the option letter only (A/B/C/D/...)
  \item Do not include the option text after the letter
  \item Maintain professional and logical coherence throughout
\end{enumerate}
\medskip

\textbf{Merged Reasoning Response:}
\end{tcolorbox}

\begin{tcolorbox}[
  enhanced,
  breakable,
  colback=lightblue!10, colframe=lightblue!90!black,
  title={Aggregation for MMLU-Pro and AQUA-RAT Benchmark},
  fonttitle=\bfseries
]
\small\ttfamily
\hypertarget{aggregation-prompt}{}
You are synthesizing multiple reasoning responses to create a single, unified reasoning path.
\medskip

\textbf{Context:}\\
You are given several reasoning responses to the same question, each from a different client. Each response includes a confidence score (higher = more confident).
\medskip

\textbf{Question:}\\
\texttt{\{question\}}
\medskip

\textbf{Client Responses (with confidence scores):}\\
\texttt{\{client\_entries\}}
\medskip

\textbf{Task:}\\
Produce a single, concise, professional, and logically coherent merged reasoning response that synthesizes the reasoning leading to the final answer.
\medskip

\textbf{Requirements:}
\begin{enumerate}[leftmargin=*, itemsep=1pt, topsep=2pt]
  \item Synthesize the reasoning leading to the final answer
  \item Give greater weight to reasoning from higher-confidence responses
  \item Avoid unnecessary repetition or irrelevant details
  \item Keep the ENTIRE response (reasoning + final answer) within \texttt{\{token\_limit\}} tokens
  \item End with: ``The answer is (X)'' where X is the option letter only (A/B/C/D/...)
  \item Do not include the option text after the letter
  \item Maintain professional and logical coherence throughout
\end{enumerate}
\medskip

\textbf{Merged Reasoning Response:}
\end{tcolorbox}

\begin{tcolorbox}[
  enhanced,
  breakable,
  colback=lightblue!10, colframe=lightblue!90!black,
  title={Aggregation for GSM8K Benchmark},
  fonttitle=\bfseries
]
\small\ttfamily
\hypertarget{aggregation-prompt}{}
You are synthesizing multiple reasoning responses to create a single, unified solution path.
\medskip

\textbf{Context:}\\
You are given several reasoning responses to the same question, each from a different client. Each response includes a confidence score (higher = more confident).
\medskip

\textbf{Question:}\\
\texttt{\{question\}}
\medskip

\textbf{Client Responses (with confidence scores):}\\
\texttt{\{client\_entries\}}
\medskip

\textbf{Task:}\\
Produce a single, concise, professional, and logically coherent merged reasoning response that synthesizes the reasoning leading to the final answer.
\medskip

\textbf{Requirements:}
\begin{enumerate}[leftmargin=*, itemsep=1pt, topsep=2pt]
  \item Synthesize the reasoning leading to the final answer
  \item Give greater weight to reasoning from higher-confidence responses
  \item Avoid unnecessary repetition or irrelevant details
  \item Keep the ENTIRE response (reasoning + final answer) within \texttt{\{token\_limit\}} tokens
  \item End with the exact sentence: ``The answer is X.''
  \item X must be a single numeric value (e.g., 12, -3/5, 7.25)
  \item No units or extra text after the answer
\end{enumerate}
\medskip

\textbf{Merged Reasoning Response:}
\end{tcolorbox}

\subsection{FERA Variants}
\label{app:FERA_variants}
\paragraph{FERA-GT.} To evaluate the effectiveness of FERA, we compare it with a simplified baseline, FERA-GT. In this baseline, the server issues a query to clients, who independently retrieve the top-$C$ question–answer pairs using the MMR demonstration strategy. These retrieved pairs serve as fixed context for generating client responses, which are then aggregated by the server to produce the final answer. Importantly, FERA-GT completes this process in a single communication round without iterative context refinement. In contrast, FERA employs multiple rounds of interaction, enabling dynamic context updates and progressively enhanced answer quality—underscoring its adaptability and superior reasoning depth.

\paragraph{FERA-Q.} FERA-Q corresponds to a simplified setting where the LLM is limited to predicting only the final answer to each question, without generating intermediate reasoning steps. In this setup, when clients select demonstrations from their local datasets, they discard the reasoning trajectories \( S_{\{1:T\}} \) and retain only the final answers \( a \). This setting reflects scenarios in which the LLM lacks the ability to handle or benefit from step-by-step reasoning. During inference, each client uses the selected final-answer-only demonstrations to generate predictions for the server-issued query set. The predicted answers are then sent back to the server. The server performs aggregation using the \textit{Uncertainty-Aware Weighted Averaging} (UA-WA) strategy, which weights client responses based on their confidence scores. The aggregated results are used to update the global query--answer set, enabling iterative refinement even without explicit reasoning steps.

\paragraph{FERA-Free.} FERA-Free refers to a setting where clients have local LLMs but no labeled data—only question prompts are available locally. Since clients cannot construct demonstrations from their own data, they rely entirely on query-answer pairs provided by the server. In each round, the server distributes example queries and answers. Clients use these examples to prompt their local LLMs and generate responses to new server queries. The server then aggregates responses using uncertainty-aware methods (UA-WA or UA-SCA) and refines the query-answer set for the next round. This setup captures a practical constraint where local data may be unlabeled due to privacy, cost, or domain limitations. FERA-Free shows that meaningful collaboration is possible even without local supervision by leveraging server-side guidance.

\section{Experiment Setup}
\label{app:exp_setup}
\begin{figure}
    \centering
    \includegraphics[width=\linewidth]{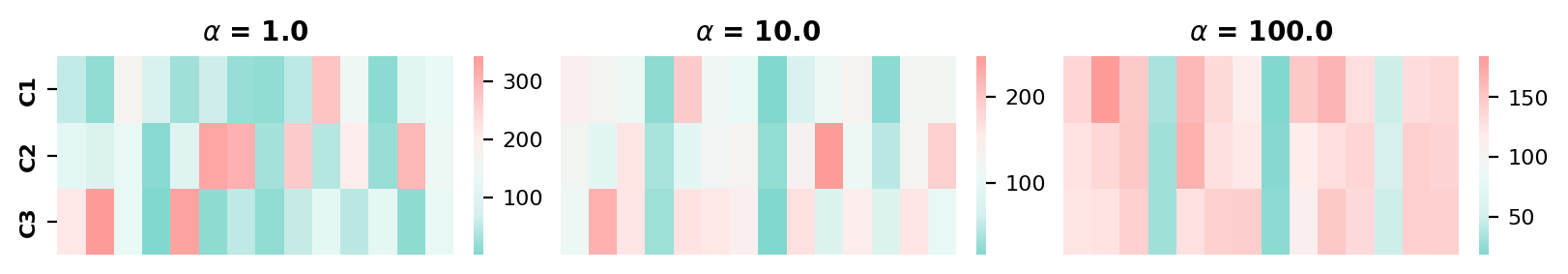}
         \captionof{figure}{Client dataset distribution under different $\alpha$ settings on the MMLU-Pro benchmark.}
        \label{fig:data_distribution_heatmap}
\end{figure}

\subsection{Construction of Client Datasets}
\label{app:client_data}

To evaluate FERA under realistic heterogeneous and domain-specialized conditions, we simulate a federated environment by partitioning each benchmark into client-specific subsets.

\textbf{MMLU-Pro.} 
MMLU-Pro spans 14 diverse subject areas—including physics, medicine, philosophy, economics, and law—which naturally provides a multi-domain foundation for studying specialized client expertise. To introduce controlled heterogeneity across clients, we construct client partitions using a Dirichlet-based sampling scheme. Specifically, for each client, we draw a label-distribution vector \( q \in \mathbb{R}^M \) from a Dirichlet distribution \( q \sim \mathrm{Dir}(\alpha p) \), where \( p \) denotes the global class distribution and \( \alpha \) controls the degree of non-IID variation. Smaller values of \( \alpha \) yield more skewed, domain-focused partitions (i.e., clients specializing in a small subset of subjects), whereas larger values produce more balanced distributions. We consider three heterogeneity levels with \( \alpha \in \{1.0, 10, 100\} \), covering a wide spectrum from highly specialized to near-homogeneous splits. Examples of the resulting client-domain distributions are shown in Figure~\ref{fig:data_distribution_heatmap}.

\textbf{AQUA-RAT and GSM8K.}
These datasets do not provide explicit category labels. In line with prior work, we partition them using random splits across clients. Although these tasks do not include domain metadata, the federated setting still benefits from distributional diversity introduced by varying client sample sizes and reasoning styles.
\subsection{Implementation Details}
\label{app:impl}
For all experiments involving FERA and its variants, we fix the number of clients to three. We evaluate these algorithms on the MMLU-Pro, GSM8K, and AQUA-RAT benchmarks, covering both reasoning and standard QA tasks. On the client side, we use Qwen3-4B and Llama3.1-8B as base models. During prediction, we set the sampling parameters to $top\_p = 0.8$ and temperature $= 0.3$. On the server side, no LLM is deployed for standard QA tasks. For complex reasoning tasks, the default server model is set to GPT-4o-mini.  

For LLM-Debate~\cite{du2023improving}, we ensure a fair comparison with FERA by adopting the same client models and the same number of clients as in the FERA setup. Moreover, the summarization model in LLM-Debate is configured to match the server model used in FERA, and the number of debate rounds is set to 5.

For FedAvg \citep{mcmahan2017communication}, We implement FedAvg following the OpenFedLLM framework \cite{ye2024openfedllm} on two LLMs ensuring consistency with other baselines. The training process consists of 50 communication rounds involving three clients, with data partitioned according to
to a Dirichlet distribution. Each client fine-tunes the model locally using a batch size of 16, a sequence length of 256,
and one gradient accumulation step, with a learning rate of 1e-5. To improve parameter efficiency, Low-Rank Adaptation
(LoRA) is applied and 8-bit quantization \cite{hu2022lora}. After each local update, model weights are aggregated using FedAvg on a central server, which then redistributes the updated global model to clients for the next round of local fine-tuning.

For FLora \citep{wang2024flora}, we conduct experiments using Llama-3.1-8B and Qwen3-4B in a heterogeneous setting, where clients are assigned different LoRA rank values—\emph{i.e.}, [64, 32, 16]—following the default configuration in their official implementations. All hyperparameters are kept unchanged, except for the number of clients. The models are trained using our client data and evaluated on our server data.

\subsection{Communication and Computational Cost}
\label{app:cost}

Transmission cost refers to the communication overhead between clients and the server, measured in total bits exchanged across different algorithms. For FERA, its variants, and LLM-Debate, we set the number of iterative update rounds to six. For traditional federated learning baselines such as FedAvg, the number of communication rounds is set to 50, consistent with the experimental setup described in Appendix~\ref{app:exp_setup}.

Both questions and responses are tokenized, with each response capped at a maximum of $C = 256$ tokens. The total number of queries is fixed at $M = 70$ across all benchmarks. Transmission cost accounts for both the tokens sent by clients and the server in each round, allowing for a fair comparison of communication efficiency across different methods.

\textbf{Computational cost.} We measure computational cost in FLOPs. For a transformer with $|\theta|$ parameters, one forward pass on a sequence of $n$ tokens costs approximately $2n|\theta|$ FLOPs, and one backward pass costs approximately $4n|\theta|$ FLOPs~\citep{kaplan2020scaling}. We use the following notation: $L$ is the number of clients, $M$ the number of server queries, $N$ the number of local examples per client, $n_s$ the sequence length per LLM call, $|\theta|$ and $|\theta^S|$ the number of parameters of the client and server model respectively, $K$ the number of rounds for training-free methods, $K_{\text{fed}}$ the number of rounds for training-based methods, $E$ the number of local training epochs, $B$ the training batch size, $r$ the LoRA rank, $d$ the model hidden dimension, and $\hat{L}$ the number of weight matrices with LoRA applied.

\textbf{FERA.} In each round, every client performs inference on $N$ local examples (local refinement) and $M$ server queries (client labeling), and the server aggregates responses for $M$ queries. All operations are forward-only:
\begin{equation}
\Phi_{\text{FERA}} = K \cdot \Big[\underbrace{L(N + M) \cdot 2 n_s |\theta|}_{\text{client inference}} + \underbrace{M \cdot 2 n_s |\theta^S|}_{\text{server aggregation}}\Big].
\end{equation}

\textbf{LLM-Debate.} In each round, every client generates responses for $M$ queries, the server summarizes the responses, and all clients revise based on the summary. All operations are forward-only:
\begin{equation}
\Phi_{\text{Debate}} = K \cdot \Big[\underbrace{LM \cdot 2 n_s |\theta|}_{\text{client generate}} + \underbrace{M \cdot 2 n_s |\theta^S|}_{\text{server summarize}} + \underbrace{LM \cdot 2 n_s |\theta|}_{\text{client revise}}\Big].
\end{equation}

\textbf{FedAvg.} Each of $L$ clients performs $E$ epochs of fine-tuning over $N$ local examples with batch size $B$. Each gradient step involves one forward and one backward pass:
\begin{equation}
\Phi_{\text{FedAvg}} = K_{\text{fed}} \cdot L \cdot \left\lceil \frac{EN}{B} \right\rceil \cdot 6 B \, n_s \, |\theta|.
\end{equation}

\textbf{FLoRA.} Same structure as FedAvg but with LoRA: the forward pass traverses the full model ($2Bn_s|\theta|$), while the backward pass updates only the adapter parameters $|\theta_{\text{LoRA}}| = 2\hat{L}rd \ll |\theta|$:
\begin{equation}
\Phi_{\text{FLoRA}} = K_{\text{fed}} \cdot L \cdot \left\lceil \frac{EN}{B} \right\rceil \cdot \big(2Bn_s|\theta| + 4Bn_s|\theta_{\text{LoRA}}|\big).
\end{equation}

Table~\ref{tab:cost_concrete} instantiates these formulas. We use $n_s {=} 256$ for all methods to ensure a fair comparison. For training-based methods, we use $K_{\text{fed}}{=}50$ communication rounds (as in our experimental setup), while FERA and LLM-Debate use $K{=}6$ rounds.


The table reveals that FERA's computational cost ($8.9 \times 10^{16}$ FLOPs) is dramatically lower than training-based methods: $8\times$ cheaper than FedAvg ($7.4 \times 10^{17}$) and $3\times$ cheaper than FLoRA ($2.5 \times 10^{17}$), because training-based methods require $K_{\text{fed}}{=}50$ rounds of expensive backward passes while FERA uses only forward inference over $K{=}6$ rounds. Compared to LLM-Debate ($4.1 \times 10^{16}$), FERA costs approximately $2\times$ more FLOPs due to the local refinement step ($LN$ inference calls per round), but this additional computation directly translates into substantial accuracy improvements: as shown in Section 5.3, the local refinement step enables FERA to progressively integrate client knowledge across rounds, consistently outperforming LLM-Debate by a significant margin across all benchmarks. This represents a favorable trade-off: a moderate $2\times$ increase in FLOPs yields substantial accuracy gains, while remaining an order of magnitude cheaper than training-based alternatives.
\subsection{Demonstration Selection} 
\label{app:demo_select}
In \textbf{Step 2} and \textbf{Step 3} of Algorithm~\ref{alg:FERA}, FERA selects demonstrations either from the server's query set or from the local client dataset. These demonstrations are used to facilitate client-side labeling and to update the server's query--answer set. The selection process is governed by Equation~\ref{eq:mmr}, which implement a similarity--diversity trade-off strategy. In both equations, the function \( \text{Sim} \) measures the similarity between reasoning examples. To compute this, each example is first embedded using the \texttt{paraphrase-MiniLM-L6-v2} model~\citep{reimers2019sentence}, which converts the textual reasoning into fixed-length vectors. Cosine similarity is then applied by default to quantify the similarity between embedded representations. The function \( \text{Div} \) captures the diversity within the selected set of demonstrations, encouraging the inclusion of examples that are distinct from one another. This prevents redundancy and promotes a richer representation of reasoning styles. A trade-off exists between selecting highly relevant demonstrations (those similar to the target query) and ensuring sufficient diversity within the set. This trade-off is controlled by the hyperparameter \( \lambda \), which balances the weight between relevance and diversity. Unless otherwise specified, we set \( \lambda = 0.5 \) as the default value.

\subsection{Uncertainty Calculation}
\label{app:unc_calc}
We use the uncertainty scores to guide the aggregation of client responses on the server side. Each client generates both predictions and their corresponding logits, from which we calculate uncertainty following the approach of \cite{duan2023shifting,farquhar2024detecting,zhang2025token}. To quantify the model's confidence in its generated responses, we employ a token-level entropy-based uncertainty estimation approach. Specifically, for each generated token position $t$, we calculate the entropy of the probability distribution over the vocabulary: 
\begin{equation}
H_t = -\sum_{i=1}^V p_{t,i} \cdot \log(p_{t,i} + \varepsilon),
\end{equation}
where $p_{t,i}$ represents the softmax probability of token $i$ at position $t$, $V$ is the vocabulary size, and $\varepsilon = 1 \times 10^{-10}$ ensures numerical stability. The overall uncertainty score is computed as the average entropy across all $T$ generated tokens: 
\begin{equation}
U = \frac{1}{T} \sum_{t=1}^T H_t.
\end{equation}
This metric captures the model's predictive confidence, where higher entropy values indicate greater uncertainty in token selection, while lower entropy values reflect more confident probability distributions. This approach provides a straightforward yet effective measure of generation uncertainty that can be computed directly from the model's output logits without requiring additional training or calibration, making it well-suited for our federated learning framework.

\section{Supplementary Experiments}
\subsection{Main Results Using Qwen3-4B as the Client Model}
\label{app:exp_qwen}
Figure~\ref{fig:Main_results_Qwen} and Figure~\ref{fig:Main_results_2_qwen} report the performance of FERA with Qwen3-4B as the client model, compared against FERA variants and baseline methods on the MMLU and AQUA-RAT benchmarks. The results show that FERA consistently outperforms competing approaches even when using Qwen3-4B, demonstrating its robustness across different client models.

Figure~\ref{fig:Qwen_Rounds} illustrates how the performance of FERA, FERA-Free, and FERA-Q evolves with an increasing number of interaction rounds. As the number of iterations increases, all variants show noticeable performance gains, demonstrating the effectiveness of the iterative refinement mechanism in the proposed framework.

\begin{figure}
    \centering
\includegraphics[width=\linewidth]{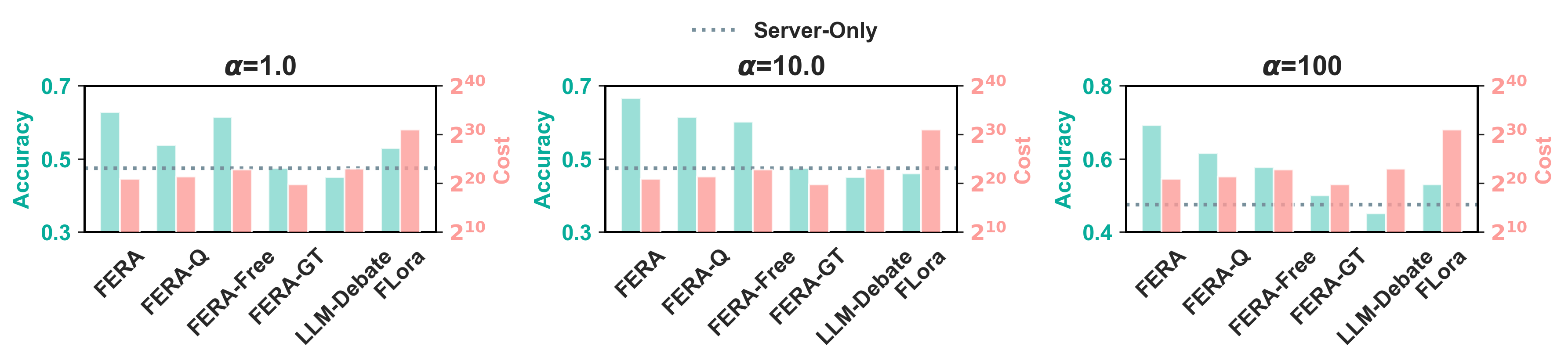}
    \caption{\footnotesize Performance comparison of \textsc{FERA} and its variants against baseline methods on the \textsc{MMLU-Pro} benchmark under varying degrees of client-level data heterogeneity, using Qwen3-4B as the base model. Client data heterogeneity is simulated via a Dirichlet distribution with concentration parameter $\alpha \in [1.0, 10, 100]$, where smaller $\alpha$ values correspond to more severe heterogeneity across clients and larger values indicate increasingly homogeneous data distributions.}
\label{fig:Main_results_Qwen}
\end{figure}

\begin{figure*}[t]
\begin{minipage}{0.44\textwidth}
    \centering
   \includegraphics[width=\linewidth]{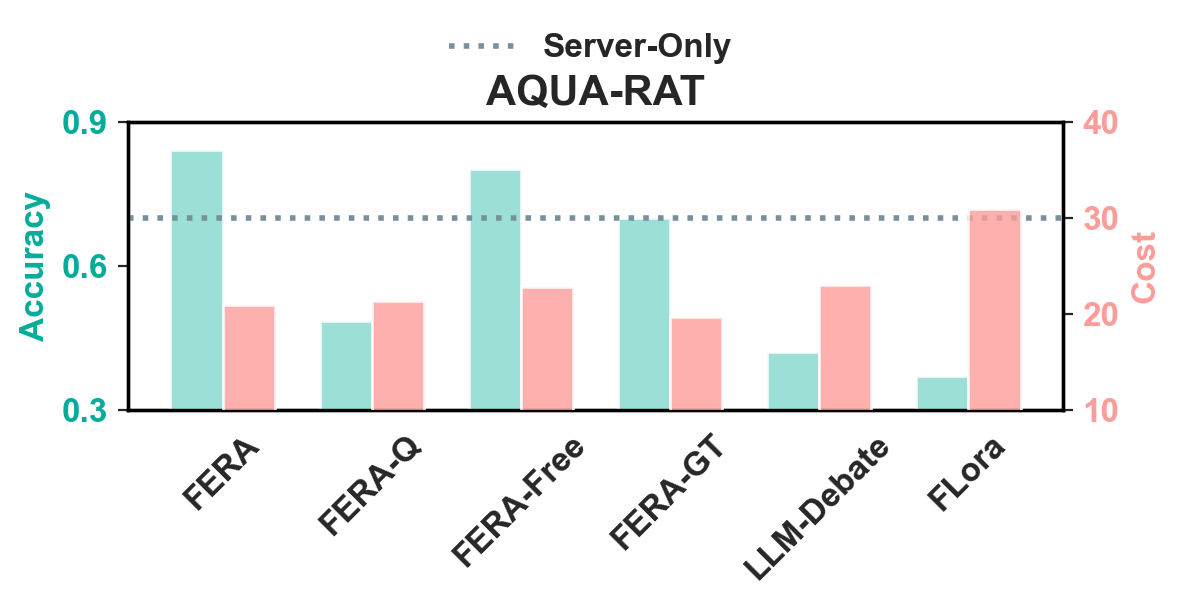}
    \caption{\footnotesize Performance comparison of FERA and its variants against baseline methods on the AQUA-RAT benchmark, with Qwen3-4B serving as the client-side model.}
    \label{fig:Main_results_2_qwen}
\end{minipage}
    \centering
    \begin{minipage}{0.55\textwidth}
        \centering
        \includegraphics[width=\linewidth]{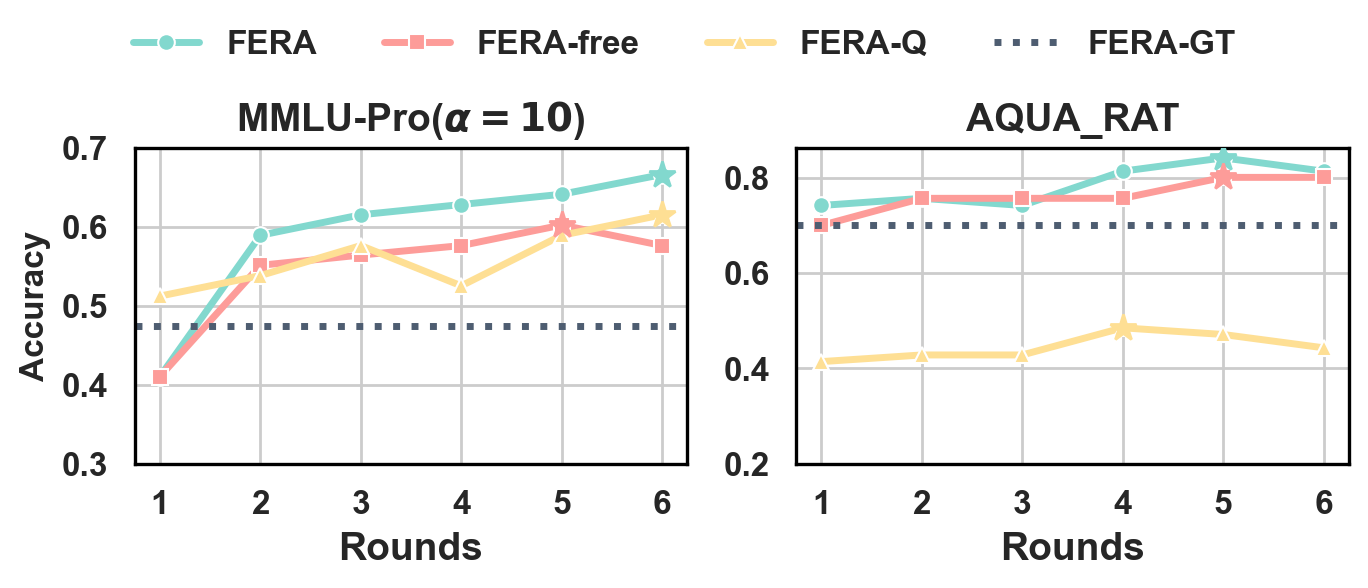}
\caption{\footnotesize Effect of interaction round count on the performance of FERA and FERA-Free in the MMLU-Pro benchmark. The Dirichlet concentration parameter is set to $\alpha = 10.0$ to simulate moderate client-level data heterogeneity. All experiments use Qwen3-4B as the client model.}
\label{fig:Qwen_Rounds}
    \end{minipage}
\end{figure*}

\subsection{Additional Ablation Study}
\label{app:exp_analysis}

\begin{figure*}[t]
    \begin{minipage}{0.49\textwidth}
        \centering
        \includegraphics[width=\linewidth]{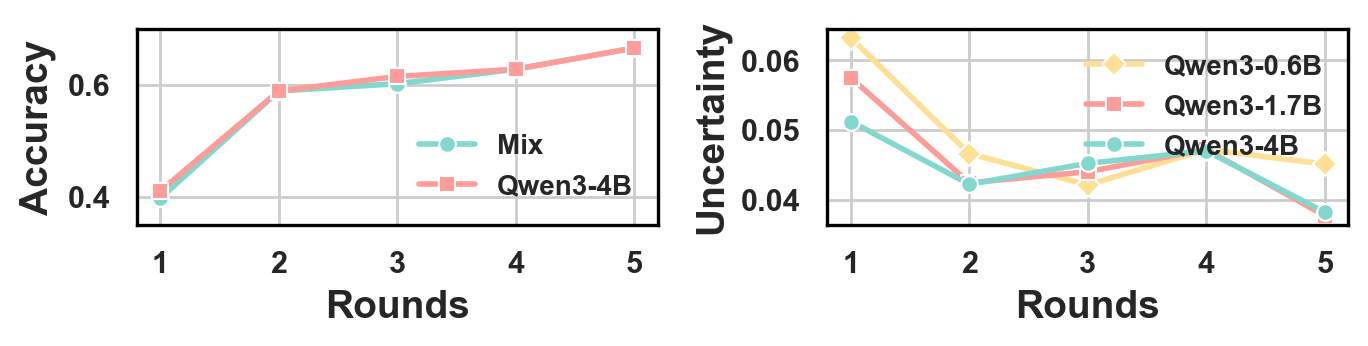}
        \vspace{-0.2in}
    \caption{\footnotesize Effect of model-capacity heterogeneity on FERA performance for the MMLU-Pro benchmark.}
    \label{fig:mix_model}
    \end{minipage}
    \begin{minipage}{0.49\textwidth}
        \centering
        \includegraphics[width=\linewidth]{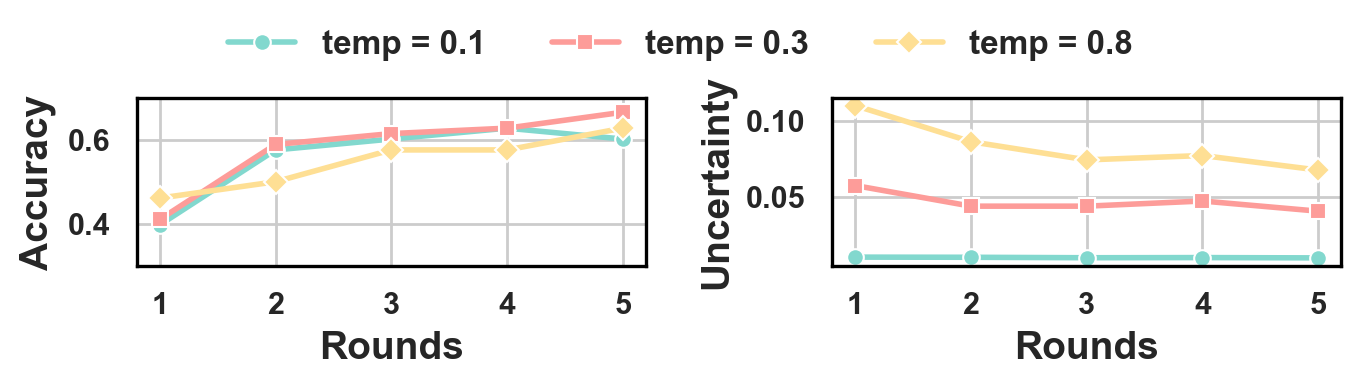}
        \vspace{-0.25in}
    \caption{\footnotesize Effect of Uncertainty Characteristics on FERA performance for the MMLU-Pro benchmark.}
    \label{fig:diff_temp}
    \end{minipage}
\end{figure*}

\begin{figure}
    \centering
    \includegraphics[width=0.8\linewidth]{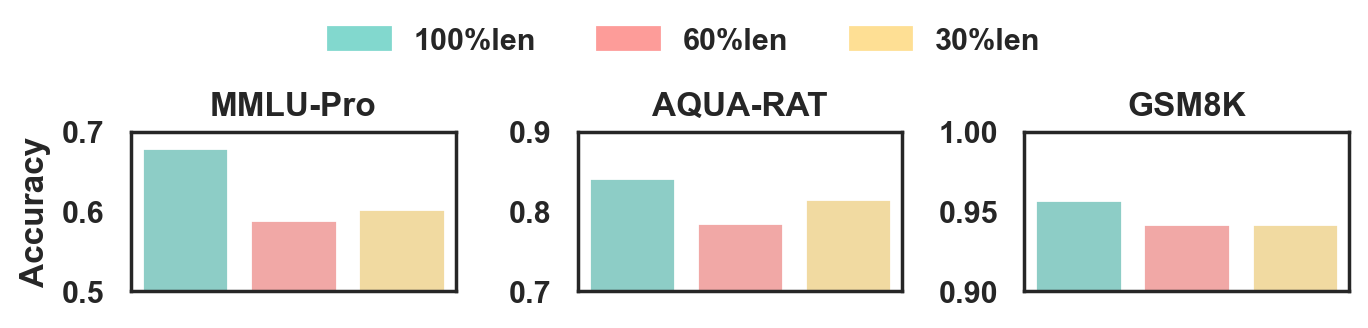}
    \caption{\footnotesize Performance of FERA under Varying Client Reasoning Response Quality.}
    \label{fig:sparse}
\end{figure}

\paragraph{Effect of Client Model Capacity Heterogeneity.} To examine the impact of heterogeneous model capacities, we conduct an ablation in which clients use LLMs of different sizes: Qwen3-4B, Qwen3-1.7B, and Qwen3-0.6B. As shown in Figure~\ref{fig:mix_model}, smaller models exhibit higher initial uncertainty but their uncertainty decreases steadily over communication rounds. The overall performance of FERA remains comparable to the homogeneous Qwen3-4B setting, suggesting that uncertainty-aware weighting effectively moderates less reliable client contributions.

\paragraph{Effect of Uncertainty Characteristics.} We isolate the effect of uncertainty from model capacity by having all clients use Qwen3-4B but with different decoding temperatures (0.3, 0.5, 0.8). As shown in Figure~\ref{fig:diff_temp}, FERA achieves similar accuracy across all settings while uncertainty decreases over rounds, suggesting that iterative refinement stabilizes generation behavior.

\paragraph{Effect of Demonstration Quality.} We simulate degraded reasoning by randomly subsampling reasoning steps (e.g., \texttt{30\%len} retains 30\% of steps). As shown in Figure~\ref{fig:sparse}, truncating demonstrations lowers FERA's performance, but degradation is graceful rather than catastrophic—uncertainty-weighted aggregation limits the influence of unreliable clients, and self-critique corrects inconsistent reasoning paths at the server.

\begin{figure*}
\centering
    \begin{minipage}{0.45\textwidth}
        \centering
        \includegraphics[width=\linewidth]{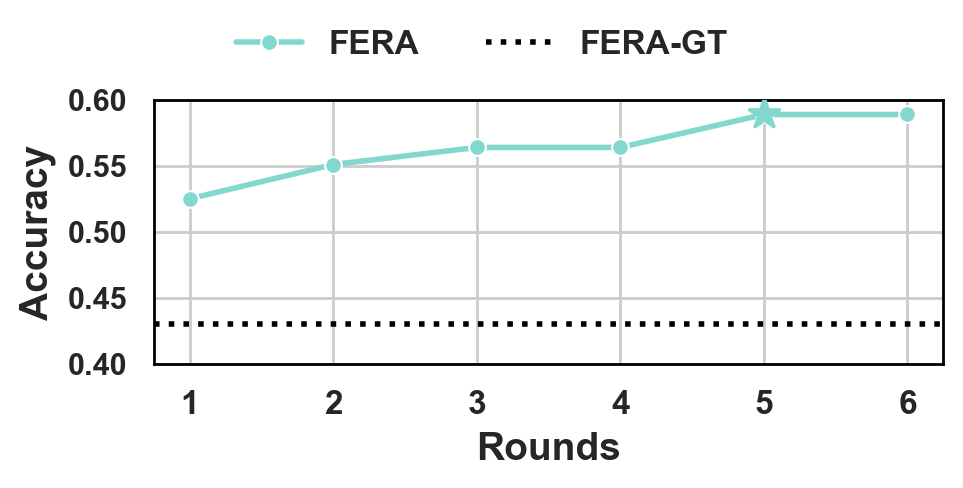}
    \caption{\footnotesize FERA performance on the MMLU-Pro reasoning benchmark. Both the client and server models are GPT-4o-mini.}
    \label{fig:client_model}
    \end{minipage}
        \begin{minipage}{0.45\textwidth}
        \centering
        \includegraphics[width=\linewidth]{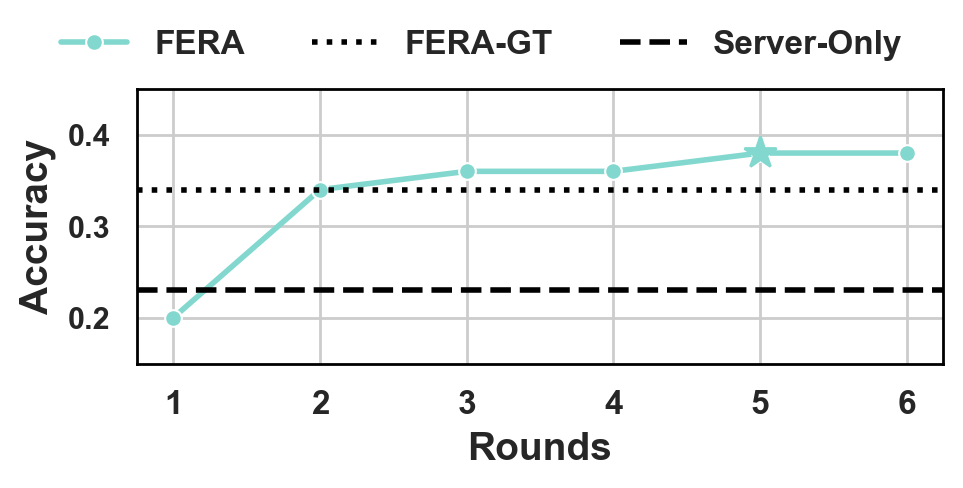}
    \caption{\footnotesize FERA performance in a specialized-domain setting on the MMLU-Pro law category. Client models use Qwen3-4B, and the server model is GPT-4o-mini. }
    \label{fig:specialized_Domain}
    \end{minipage}
\end{figure*}

\begin{figure}
    \centering
    \includegraphics[width=0.7\linewidth]{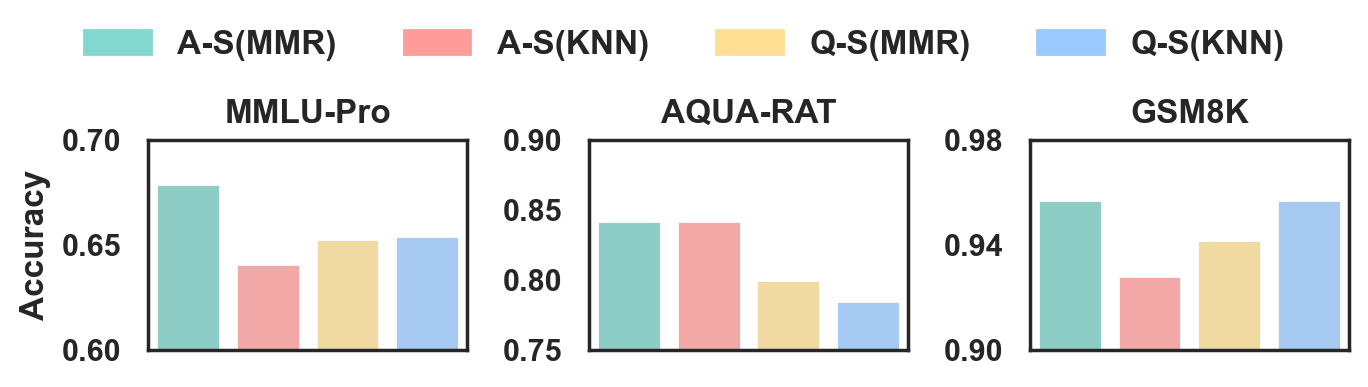}
    \caption{\footnotesize Performance of FERA under different demonstration selection strategies across MMLU-Pro, AQUA-RAT, and GSM8K.}
    \label{fig:abl_demo_select}
\end{figure}

\paragraph{Effect of Client Model Capacity.}  In our default configuration, clients run open-source LLMs (e.g., Qwen3-4B or Llama-3.1-8B), while the server model varies by task. In this ablation, to isolate the impact of using a closed-source stack and to remove model heterogeneity between endpoints, we set both the client and the server to GPT-4o-mini. The GPT-4o-mini API exposes response-level scores (log-probability–based confidences), which we use directly to compute uncertainty for our Uncertainty-Aware Demonstration Selection and Uncertainty-Aware Aggregation, without any additional reward/critic model. This keeps the uncertainty signal aligned with the generator’s own beliefs and ensures a fair comparison under identical decoding settings. The results, presented in Figure~\ref{fig:client_model}, demonstrate that FERA remains effective even in a closed-source setting. It consistently outperforms the FERA-GT baseline across communication rounds, indicating that FERA is model-agnostic and capable of leveraging native confidence signals to enhance performance.

\paragraph{Effect of Specialized Domains.}
\label{app:spec_domain}
To evaluate FERA in settings where client expertise is concentrated within a particular domain, we conduct an additional experiment using the \texttt{law} category of MMLU-Pro, which is the domain where the server model shows the weakest standalone performance. We construct a server-side evaluation set by selecting 50 law questions, and distribute all remaining MMLU-Pro questions across clients using the same non-IID Dirichlet partitioning as in our main experiments. This yields a domain-imbalanced configuration in which the server has limited direct exposure to law-domain data and must rely on clients that hold the majority of the relevant information. As shown in Figure~\ref{fig:specialized_Domain}, FERA steadily improves over communication rounds and outperforms all baselines, indicating that the framework can effectively integrate specialized client knowledge even when the server begins with weak domain proficiency.

\paragraph{Effect of Demonstration Selection Strategy. } Figure~\ref{fig:abl_demo_select} presents a comparison of four client-side demonstration selection strategies, which guides how to determine the set $\cS_{k, Q}^i$ denoted in Step 2 and 3 of FERA. They are (i) \textit{A-S (MMR)}, our default selection (ii) \textit{A-S (KNN)} uses the same adaptive selector but instantiates it with \(k\)-nearest neighbors (KNN). Given a sample \((q, s_{1:T}, a)\) and a candidate dataset, the selector first embeds the sample, computes similarity scores to all candidates, and selects the top-\(k\) nearest neighbors as demonstrations. Unlike \textit{A-S (MMR)}, which explicitly trades off relevance and diversity, \textit{A-S (KNN)} selects demonstrations based solely on similarity and does not encourage diversity among the chosen examples. (iii) \textit{Q-S (MMR)}, which applies MMR based solely on question similarity; and (iv) \textit{Q-S (KNN)}, which uses KNN based only on question similarity. Among these, \textit{A-S (MMR)} achieves the highest accuracy, outperforming both its KNN-based variant and the question-only baselines. This performance gain highlights the effectiveness of combining MMR's relevance-diversity trade-off with the adaptive selector’s incorporation of broader contextual signals beyond simple question similarity.

\paragraph{Effect of Demonstration Quantity.}  We investigate the impact of the number of context demonstrations on performance of FERA and FERA-Q by comparing setups with 1, 3, and 5 examples. As shown in Figure~\ref{fig:demo_num},~\ref{fig:demo_num2} increasing the number of context examples improves the LLM’s understanding of the query, resulting in higher response accuracy, which will also lead to the better performance of FERA and FERA-Q.

\paragraph{Effect of Varying Client Numbers.}  We investigate the impact of client population size by comparing the performance of FERA and FERA-Q under configurations with 3 and 5 clients. As shown in Figures~\ref{fig:client_num} and~\ref{fig:client_num2}, increasing the number of clients leads to a noticeable decline in performance. The primary cause is that, with a fixed-size dataset partitioned across more clients, each client receives fewer and less diverse demonstrations, which increases data heterogeneity across clients. This heightened heterogeneity leads to more divergent and conflicting reasoning paths for the same query, making server-side aggregation more challenging.

Importantly, FERA does not require all clients to participate in every round. In large-scale scenarios, the server can select a subset of $L' < L$ clients per round, controlling the level of heterogeneity while still benefiting from distributed data. This partial participation strategy directly mitigates the performance degradation observed above, since fewer clients per round reduces conflicting reasoning paths while the iterative refinement process can still incorporate different clients across rounds. Combined with uncertainty-aware weighting, which naturally assigns lower influence to noisy or irrelevant clients, FERA's architecture is compatible with larger federations without fundamental modifications.

\begin{figure*}[t]
    \begin{minipage}{0.58\textwidth}
        \centering
        \includegraphics[width=\linewidth]{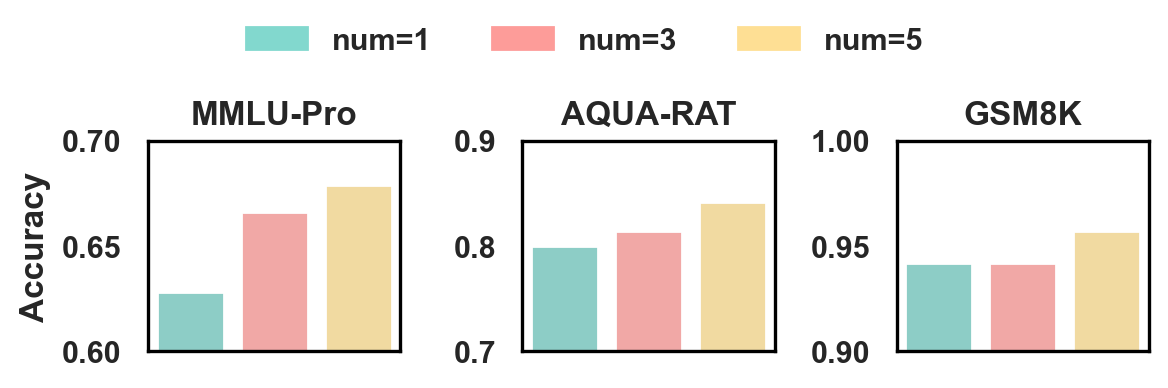}
    \caption{\footnotesize Effect of in-context demonstration quantity on FERA performance for differnt benchmarks.}
    \label{fig:demo_num}
    \end{minipage}
    \centering
        \begin{minipage}{0.39\textwidth}
        \centering
        \includegraphics[width=\linewidth]{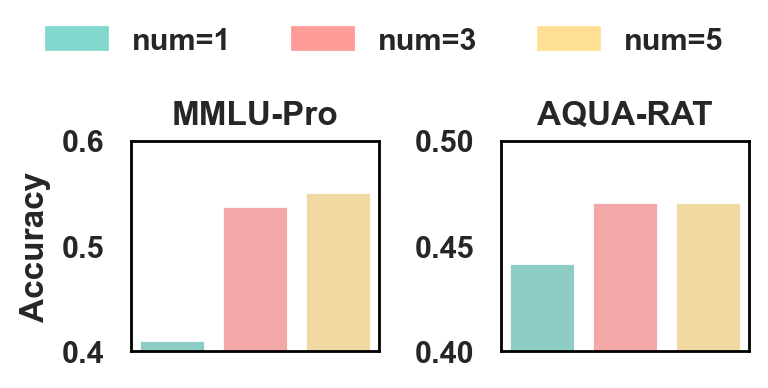}
    \caption{\footnotesize Effect of the number of demonstrations on FERA-Q performance for MMLU-Pro and AQUA-RAT benchmarks.}
    \label{fig:demo_num2}
    \end{minipage}
        \begin{minipage}{0.58\textwidth}
        \centering
        \includegraphics[width=\linewidth]{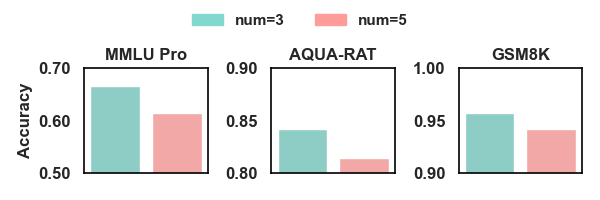}
    \caption{\footnotesize Effect of number of clients on FERA performance for different benchmarks.}
    \label{fig:client_num}
    \end{minipage}
    \centering
        \begin{minipage}{0.39\textwidth}
        \centering
        \includegraphics[width=\linewidth]{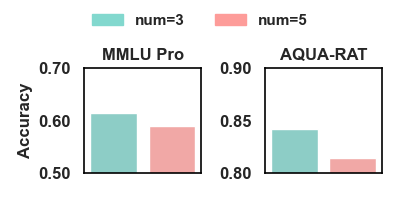}
    \caption{\footnotesize Effect of the number of clients on FERA-Q performance for MMLU-Pro and AQUA-RAT benchmarks.}
    \label{fig:client_num2}
    \end{minipage}
\end{figure*} 
\section{Design Considerations for the Uncertainty Measure}
\label{app:unc_design}

In the federated reasoning setting, clients operate under strict privacy, computational, and communication constraints, which make many uncertainty-estimation techniques used in centralized LLM deployments impractical. Approaches such as semantic uncertainty, model-specific confidence scores, or auxiliary verifier–based methods generally require multiple forward passes, additional embedding or classifier modules, or model-dependent calibration procedures. These operations introduce substantial inference overhead, increase communication cost, and conflict with FERA’s training-free and model-agnostic design objectives. Token-level entropy, by contrast, offers a lightweight and consistent alternative. It is computed directly from the token-probability distributions already produced during decoding, requires no auxiliary components or calibration, and adds no additional computational or architectural assumptions for the client.

To evaluate whether more complex alternatives offer meaningful advantages, we conducted a small comparative study. Ten questions were sampled from the server’s evaluation set, and for each of three clients we computed both token-level entropy and semantic-uncertainty scores, while also recording the computation time required for each method. The token-level uncertainty score is computed as described in Section~\ref{app:unc_calc}, and the semantic-uncertainty measure follows the formulation in \cite{kuhn2023semantic}. The results, summarized in Figure~\ref{fig:uncertainty_comparison}, show that the two uncertainty measures are closely aligned in magnitude across all clients, indicating that semantic uncertainty does not yield qualitatively different signals. In contrast, its computational cost is substantially higher—typically 4–5× slower—due to additional encoding and comparison steps. These observations reinforce our design choice: token-level entropy provides a reliable, efficient, and privacy-compatible uncertainty signal that aligns with the constraints and objectives of federated reasoning systems like FERA.
\begin{figure}
    \centering
\includegraphics[width=0.5\linewidth]{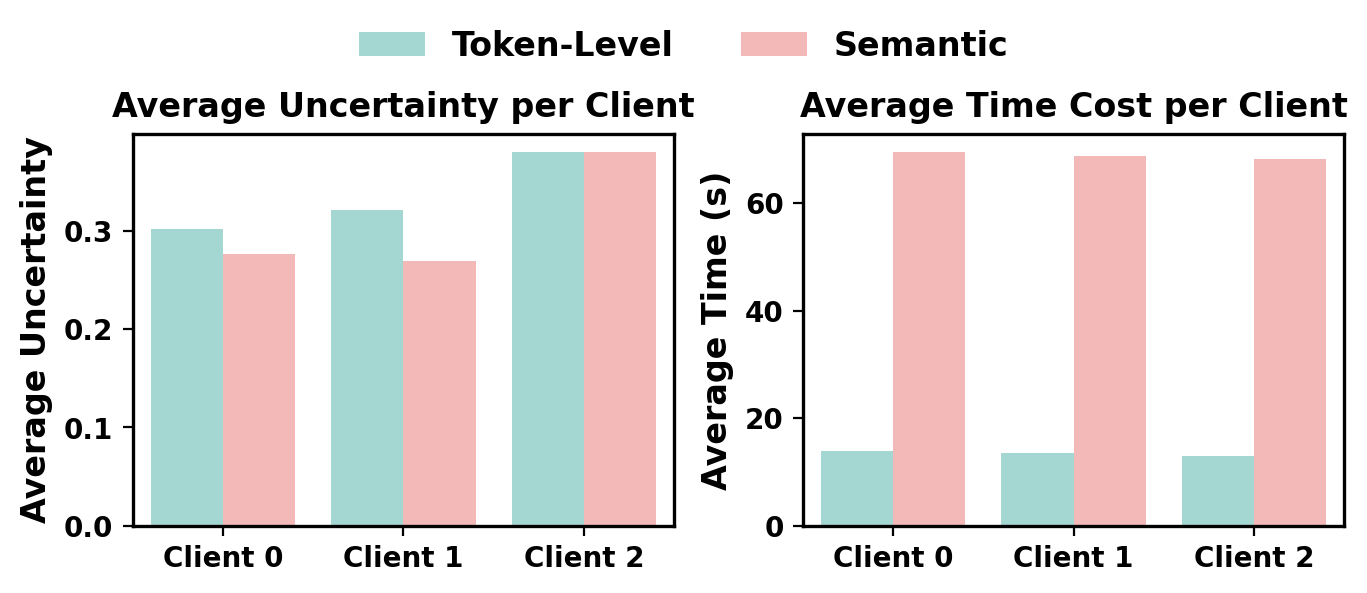}
\caption{Comparison of token-level and semantic uncertainty.
\textbf{Left}: Average uncertainty values computed for ten sampled queries across three clients. \textbf{Right:} Average computation time per client.}
\label{fig:uncertainty_comparison}
\end{figure}

\section{Privacy Analysis of the FERA Framework Details}
\label{app:privacy}
The main text analyzes FERA’s robustness against prompt extraction attacks. In this section, we present the GPT-4o prompt used for demonstration reconstruction, along with real-world cases illustrating how FERA responds under such attacks. The results are shown in Figure~\ref{fig:Privacy_Q_A}.

To further assess concerns regarding client-side privacy, particularly whether client-generated responses may inadvertently disclose private information beyond the server-issued query, we conducted additional experiments on the MMLU-Pro benchmark. Specifically, we simulated a federated setup with three clients using LLaMA3-8B-Instruct under a Dirichlet partitioning scheme with concentration parameter $\alpha = 10$. We then analyzed the generated CoTs for the presence of personal identifiers.

To quantify potential privacy leakage, we applied the “bert-base-NER” model \citep{devlin2019bert} to both the original server prompts and the client-generated reasoning response. Following the methodology of \citet{edemacu2025privacy}, we measured how often response contained personal identifiers not already present in the original prompts. The results, summarized in Table~\ref{tab:privacy-leakage}, show that only a small fraction of response included such identifiers, indicating that FERA poses minimal risk of unintended privacy leakage in this setting.

\begin{tcolorbox}[colback=lightblue!10, colframe=lightblue!90!black, title={GPT-4o Prompt for Reconstruct ICL Examples}, breakable]
\hypertarget{GPT-4o Prompt for Reconstruct}{}
\small\ttfamily
response = openai.ChatCompletion.create(\\
\hspace*{1em}model="gpt-4o",\\
\hspace*{1em}messages=[\\
\hspace*{2em}\{\\
\hspace*{3em}"role": "system",\\
\hspace*{3em}"content": "You are a helpful assistant that specializes in natural language generation and in-context learning design."\\
\hspace*{2em}\},\\
\hspace*{2em}\{\\
\hspace*{3em}"role": "user",\\
\hspace*{3em}"content": (\\
\hspace*{4em}"I am using in-context learning to answer a question, where the context is composed of several relevant question--CoT answer pairs. I will provide the query and its corresponding final answer.Please assist in reconstructing a set of five question-CoT answer pairs that are topically and logically aligned with the final answer, serving as effective context examples."\\\\
\hspace*{4em}"Question: XXXXX"\\
\hspace*{4em}"Answer: XXXXX"\\
\hspace*{3em})\\
\hspace*{2em}\}\\
\hspace*{1em}]\\
)

\end{tcolorbox}
\begin{table}[h]
    \centering
    \begin{tabular}{lcccc}
        \hline
        & Client 1 & Client 2 & Client 3 & Avg. \\
        \hline
        Value (\%) & 0.031 & 0.049 & 0.051 & 0.043 \\
        \hline
    \end{tabular}
    \caption{Fraction of client-generated reasoning response containing personal identifiers, as detected by the \texttt{bert-base-NER} model.}
    \label{tab:privacy-leakage}
\end{table}

\begin{figure*}[h]
    \centering
    \includegraphics[width=\linewidth]{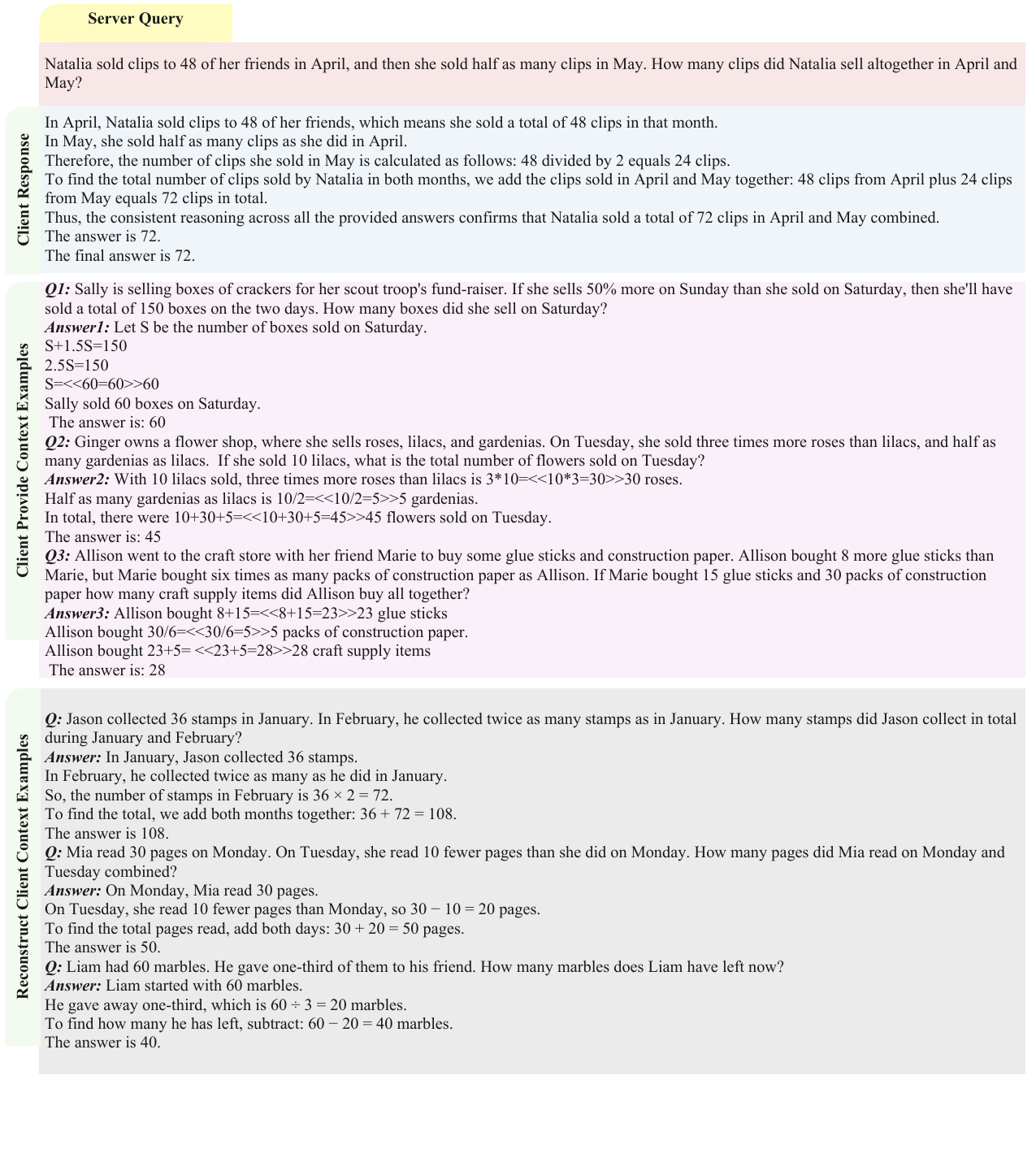}
    \caption{Privacy Analysis}
    \label{fig:Privacy_Q_A}
\end{figure*}

\section{Case Study}
\label{app:case_study}
In this section, we present real examples that illustrate how reasoning answers are updated across interaction rounds. These cases are shown in from Figure~\ref{fig:Q1_a1} to Figure~\ref{fig:Q3_a3}.

\begin{figure*}[h]
    \centering
    \includegraphics[width=\linewidth]{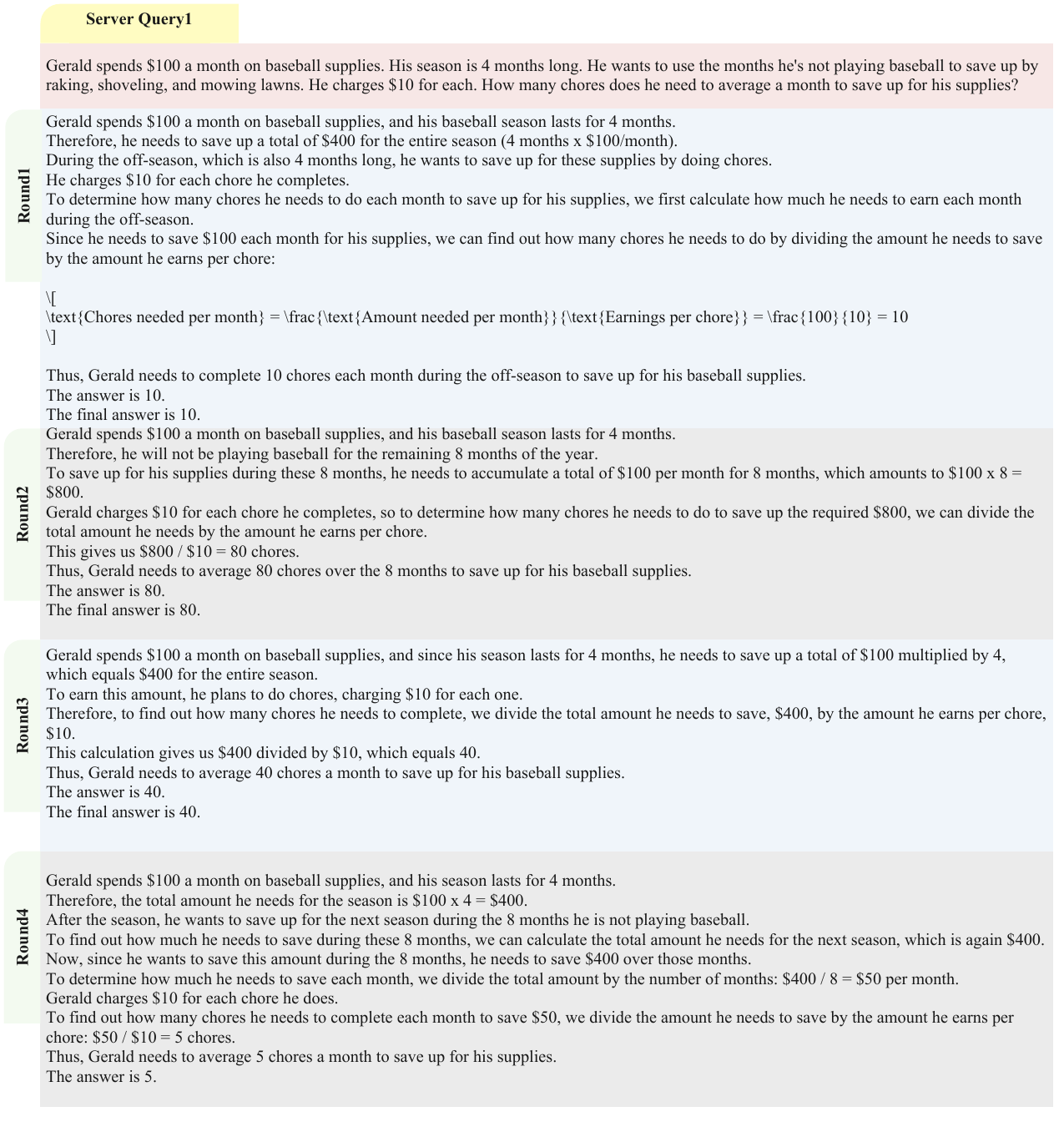}
    \caption{Example Reasoning Answer iterative update for Query 1}
    \label{fig:Q1_a1}
\end{figure*}
\begin{figure*}[h]
    \centering
    \includegraphics[width=\linewidth]{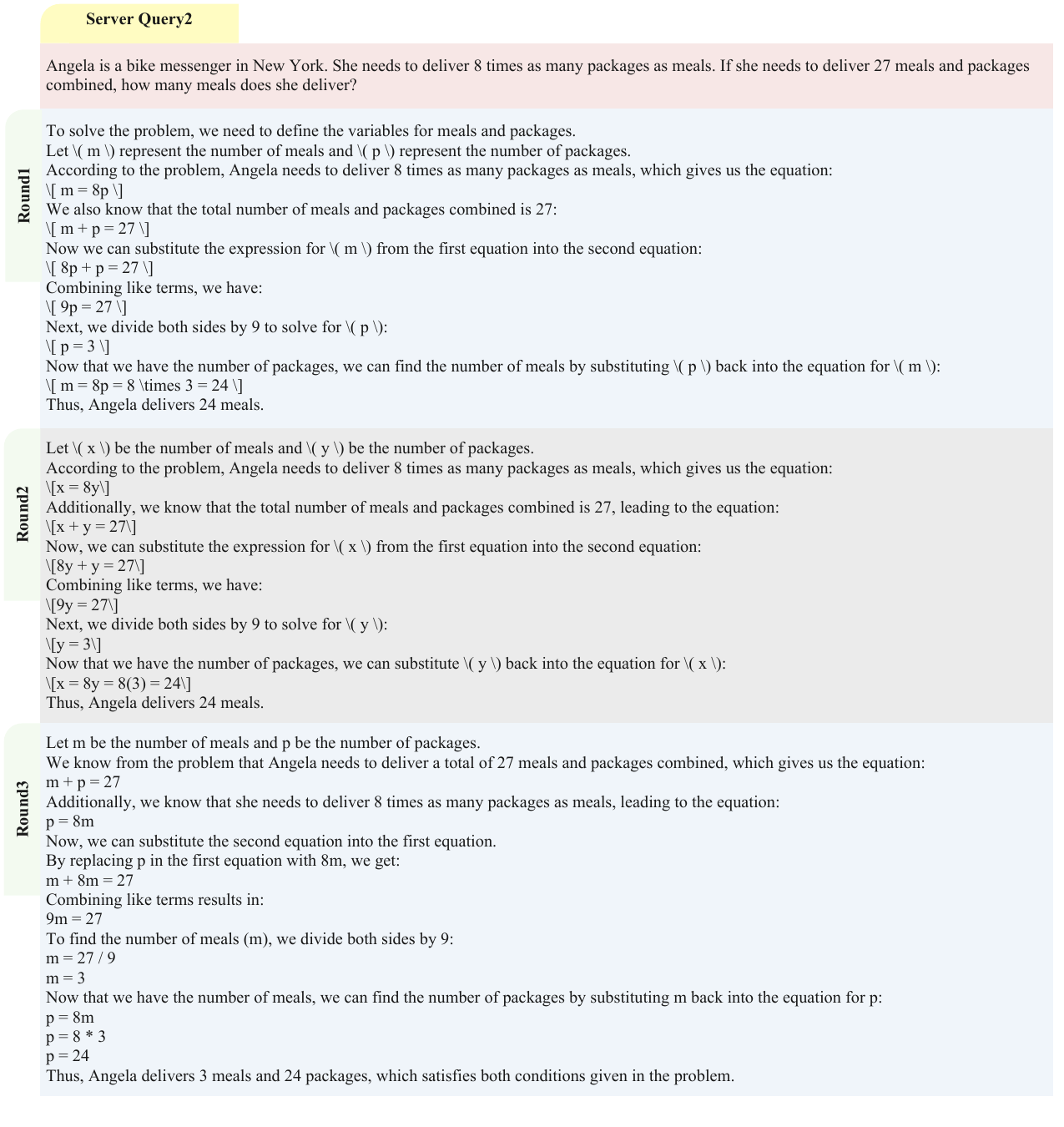}
    \caption{Example Reasoning Answer iterative update for Query 2}
    \label{fig:Q2_a2}
\end{figure*}
\begin{figure*}[h]
    \centering
    \includegraphics[width=\linewidth]{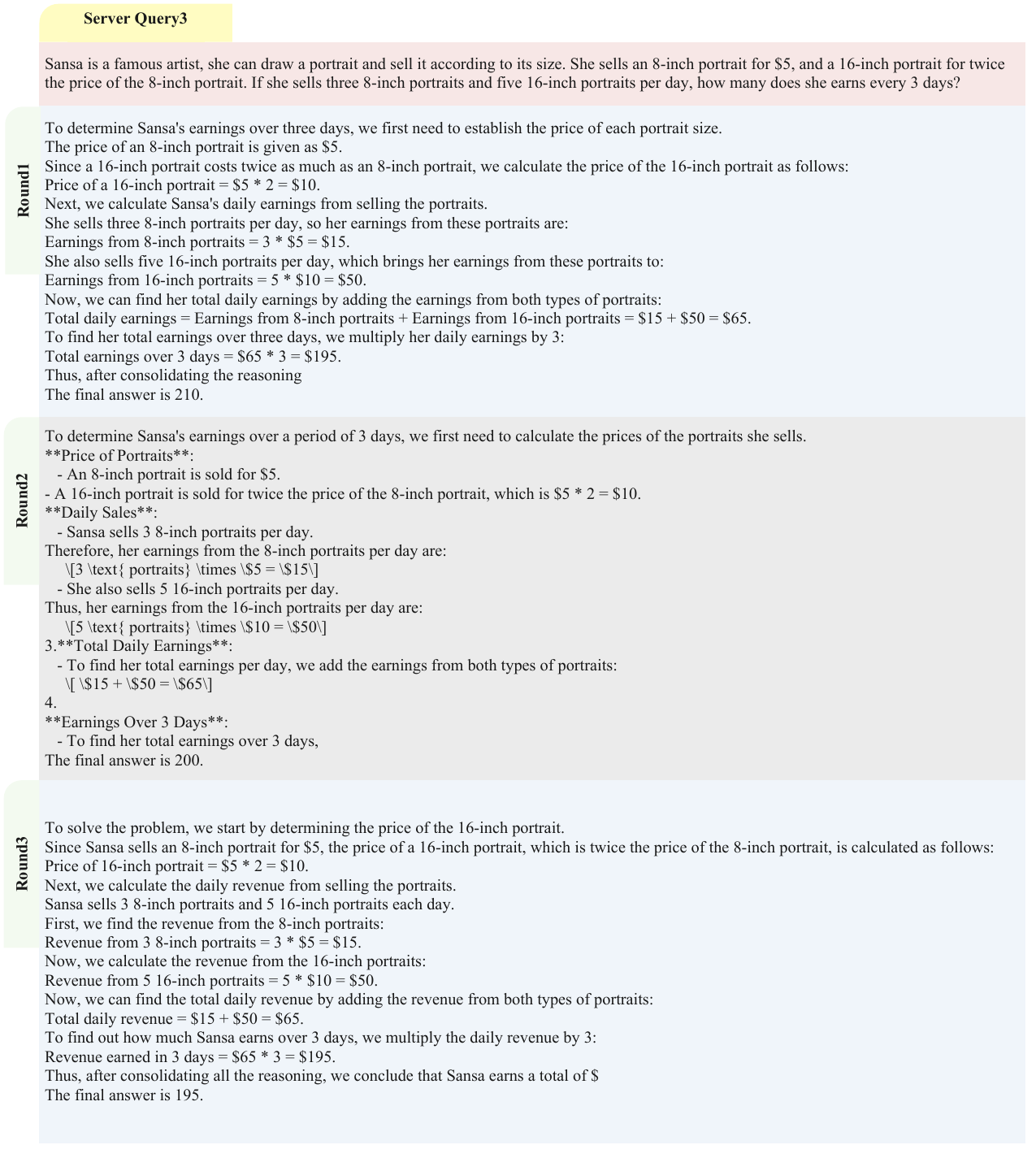}
    \caption{Example Reasoning Answer iterative update for Query 3}
    \label{fig:Q3_a3}
\end{figure*}
\end{document}